\newcommand{\calF}{{\mathcal F}}
\newcommand{\calW}{{\mathcal W}}
\newcommand{\calD}{{\mathcal D}}
\newcommand{\calB}{{\mathcal B}}
\newcommand{\calY}{\mathcal Y}
\newcommand{\calL}{\mathcal L}
\newcommand{\calA}{\mathcal A}
\newcommand{\calC}{\mathcal C}
\newcommand{\R}{{\mathbb R}}
\newcommand{\E}{\mathbb E}
\newcommand{\N}{\mathbb N}
\newcommand{\Prob}{\mathrm{Prob}}
\numberwithin{equation}{section}
\title{The Exploration of Error Bounds in Classification with Noisy Labels}
\author{ 
Haixia Liu\footnote{Part of work done during a visit at 
The Hong Kong University of Science and Technology.} \thanks{School of Mathematics and Statistics  \& Institute of Interdisciplinary Research for Mathematics and Applied Science \& Hubei Key Laboratory of Engineering Modeling and Scientific Computing, Huazhong University of Science and Technology, Wuhan, Hubei, China. Email: liuhaixia@hust.edu.cn. The work of H.X. Liu was supported in part by Interdisciplinary Research Program of HUST 2024JCYJ005, National Key Research and Development Program of China 2023YFC3804500.}, Boxiao Li\thanks{School of Mathematics and Statistics, Huazhong University of Science and Technology, Wuhan, Hubei, China. Email: liboxiao@hust.edu.cn.}, Can Yang\thanks{Department of Mathematics, The Hong Kong University of Science and Technology, Clear Water Bay, Hong Hong. Email:macyang@ust.hk. The work of Dr. Can Yang was supported in part by Hong Kong Research Grants Council Grants 16308120, 16307221, 16307322, 16302823 and 16309424; The Hong Kong University of Science and Technology Startup Grants R9405 and Z0428 from the Big Data Institute.}, Yang Wang\thanks{Corresponding author. Department of Mathematics, The University of Hong Kong, Pokfulam Road, Hong Hong. Email:yang.wang@hku.hk.}}
\date{\today}
\newtheorem{lemma}{Lemma}[section] 
\newtheorem{theorem}{Theorem}[section]
\newtheorem{assumption}{Assumption}[section]
\newtheorem{definition}{Definition}[section]
\newtheorem{remark}[theorem]{Remark}
\begin{document}

\maketitle
\begin{abstract}
Numerous studies have shown that label noise can lead to poor generalization performance, negatively affecting classification accuracy. Therefore, understanding the effectiveness of classifiers trained using deep neural networks in the presence of noisy labels is of considerable practical significance. In this paper, we focus on the error bounds of excess risks for classification problems with noisy labels within deep learning frameworks. We derive error bounds for the excess risk, decomposing it into statistical error and approximation error. To handle statistical dependencies (e.g., mixing sequences), we employ an independent block construction to bound the error, leveraging techniques for dependent processes. For the approximation error, we establish these theoretical results to the vector-valued setting, where the output space consists of $K$-dimensional unit vectors. Finally, under the low-dimensional manifold hypothesis, we further refine the approximation error to mitigate the impact of high-dimensional input spaces.
\end{abstract}
\section{Introduction}

Deep learning has made breakthroughs in the fields of computer vision \cite{Evan2017}, natural language processing \cite{Young2018}, medical image processing \cite {sarvamangala2022} and robotics \cite {bo2022}, which proves its strong efficiency in solving complicated problems. Leveraging high-performance computing resources and high-quality data, neural network-based methods outperform traditional machine learning approaches across various applications, including image classification, object detection, and speech recognition.

Over the years, the application of deep neural networks to classification problems has garnered significant attention. In practical applications, the data we collect may be disturbed by noise. Literature typically classifies noise into two types: attribute noise and class noise \cite{Zhu2004}. Attribute noise refers to disturbances that affect feature values, such as the addition of small Gaussian noise during measurement. In contrast, class noise impacts the labels assigned to instances, directly affecting the sample labels rather than the features. Some studies have indicated that random class noise can be more detrimental than random attribute noise, highlighting the importance of addressing class noise \cite{Jos2014}.

Labeling large-scale datasets is a costly and error-prone process, and even high-quality datasets are inevitably subject to mislabeling errors. Labels that may contain errors are referred to as noisy labels, which can arise from various sources. Due to the high cost of labeling, some labels are obtained from non-expert sources on websites, such as image search engines or web crawlers. Even data labeled by experts may contain errors; these classification mistakes are not solely due to human oversight but can also result from the widespread use of automated classification systems. Additionally, the limitations of descriptive language can contribute to the occurrence of noisy labels \cite{brazdil1990learning}. Furthermore, some artificial noises are introduced to protect individuals' privacy \cite{van2002, ghazi2021,liu2026rpwithprior,liu2026blockrr}.

Numerous studies have demonstrated that label noise can lead to poor generalization performance, negatively impacting classification accuracy \cite{Frenay, David2010}. Therefore, the effectiveness of classifiers trained using deep neural networks in the presence of noisy labels is of considerable practical importance.

Although neural networks have achieved remarkable results in real applications, our theoretical understanding of them remains limited. The expressive power of neural networks refers to their ability to approximate various functions. Understanding the approximation capabilities of deep neural networks has become a fundamental research direction for uncovering the advantages of deep learning over traditional methods, with related studies dating back to at least the 1980s. The Universal Approximation Theorem \cite{Andrew1994} states that a neural network with an appropriate activation function and a depth of two can approximate any continuous function on a compact domain to any desired precision. This illustrates the powerful expressive capability of neural networks and provides a theoretical foundation for their application. However, in practice, this approximation theorem has significant limitations. For instance, approximating complex functions may require a large number of neurons, leading to computational difficulties.

Currently, research on error bounds for classification problems within deep learning frameworks remains relatively scarce, especially in the context of finite samples and label noise. This paper aims to fill this gap.  To estimate the error bound, we specifically divide them into approximation error and statistical error, which can be  estimated separately. Our contributions can be summarized as follows:\\
(1) \textbf{We provide an error bound for the classification problem with noisy labels.} The main results are in Theorem \ref{theo:excess_risk} and Theorem \ref{theo:curse}.\\ 
(2) \textbf{The statistical error:} In practice, the samples in the dataset may not be independent. In this paper, we focus on a dependent (mixing) sequence. We bound the statistical error with the help of the associated independent block (IB) sequence. \\
(3) \textbf{The approximation error:} While prior works primarily focused on scalar-valued functions (output space $\R^1$), we generalize these theoretical foundations to the vector-valued setting, where the output space consists of $K$-dimensional unit vectors. \\
(4) \textbf{The curse of dimensionality:} Based on the low-dimensional manifold assumption, we effectively alleviate the issue of the curse of dimensionality.

The rest of this paper is organized as follows. Section \ref{sec:related} reviews the related works about noisy labels and error estimations. Section \ref{sec:setup} presents the problem setup, including the definitions of the expected excess risk estimations on the true and noisy distributions, the empirical excess risk estimations on the true and noisy data, the neural networks we considered in this paper, which is followed by the theoretical results of the excess risk under noisy labels in Section \ref{sec:excess}. Some technical details of proofs are provided in Section \ref{sec:proof}. In Section \ref{sec:curse}, we present a theoretical result aimed at alleviating the curse of dimensionality. Lastly, we conclude the results in Section \ref{sec:conclusion}.

\section{Related Works}\label{sec:related}
\subsection{Noisy label}
Learning from noisy-labeled data is an active area of research. Learning discriminative models was formulated using undirected graphical models \cite{vahdat2017} or probabilistic graphical models \cite{xiao2015}, which represent the relationship between noisy and clean labels in a semi-supervised setting. Sukhbaatar et al. \cite{Sukhbaatar2015} introduced an additional noise layer in the network that adapted the outputs to align with the noisy label distribution. Patrini et al. \cite{patrini2017} proposed two procedures for loss correction that were agnostic to both the application domain and the network architecture.

Furthermore, decoupling methods were designed for training with noisy labels using two models. Malach et al. \cite{Malach2017} introduced a meta-algorithm to tackle the issue of noisy labels by training two networks and establishing update rules that decoupled the decisions of `when to update' and `how to update'. Following this, the Co-teaching algorithm \cite{han2018} was developed, enabling two deep neural networks to train concurrently while teaching each other under mini-batch conditions. Building on the Decoupling and Co-teaching algorithms, the Co-teaching+ algorithm \cite{Yu2019} was introduced. By implementing the strategy of `Update by Disagreement', this method prevented the two networks from becoming overly consistent during training.

Additionally, Lee et al. \cite{Lee2018} addressed noisy labels through transfer learning methods, introducing {\it CleanNet}, a joint neural embedding network that provided insights into label noise by manually verifying only a small subset of classes and passing this knowledge to other classes. Jiang et al. \cite{jiang2018} proposed an algorithm for joint optimization of large-scale data using {\it MentorNet}, which supervised the training of a basic deep network, {\it StudentNet}, enhancing its generalization ability when trained on noisy labels.

Xia et al. \cite{xia2020} introduced a robust early learning method that distinguished between critical and non-critical parameters, applying different update rules to gradually reduce non-critical parameters to zero, thereby suppressing the impact of noisy labels. Li et al. \cite{Li2022} proposed selective supervised contrastive learning (Sel-CL), which trained with confident and noisy pairs, enabling the model to distinguish between noise and mitigate its influence on supervised contrastive learning (Sup-CL). An alternative approach was to seek loss functions that were inherently noise-tolerant \cite{ghosh2017robust}. The authors provided sufficient conditions for a loss function to ensure that risk minimization under that function was inherently resilient to label noise in multiclass classification problems. Subsequently, symmetric cross-entropy was introduced for robust learning with noisy labels \cite{wang2019symmetric}. Wei et al. \cite{Wei2023} presented a framework for label noise in fine-grained datasets, stochastic noise-tolerated supervised contrastive learning (SNSCL), which mitigated the effects of noisy labels by designing noise-tolerant loss functions and incorporating random modules.

\subsection{Error estimation}

In recent years, research on the expressive and approximation capabilities of neural networks has become increasingly active. The approximation error of feedforward neural networks with various activation functions has been studied for different types of functions. Lu et al. \cite{zhou2017} examined the influence of width on the expressive ability of neural networks, establishing corresponding approximation error estimates for Lebesgue integrable functions. They obtained a general approximation theorem for ReLU networks with bounded width, demonstrating that a ReLU network with a width of $d + 4$ is a universal approximator, where $d$ is the input dimension. Lu et al. \cite{Lu2021} established the approximation error bounds of ReLU neural networks for smoothing functions concerning width and depth. They proved that a deep neural network with width $O(N\ln N)$ and depth $O(L\ln L)$ can approximate functions $f \in C^s([0,1]^d)$, with an approximation error of $O\left ( \left \| f \right \|_{C^s([0,1]^d)} N^{-2s/d}L^{-2s/d}\right ) $ for any $N,L\in\mathbb{N}^+$, where $ \left \| f \right \|_{C^s([0,1]^d)}:=\max\left \{ \left \| \partial ^{\alpha }f \right \|   _{L^\infty ([0,1]^d)}:\left \| \alpha \right \|_1 \le s,\alpha \in \mathbb{N}^d\right \} $. Petersen et al. \cite{PETERSEN2018} established the best approximation error bounds for approximating piecewise $C^\beta$ functions on the interval $[-1/2, 1/2]^d$ using ReLU neural networks. Hadrien et al. \cite{Hadrien2021} provided an error estimate for approximating generalized finite-bandwidth multivariate functions with deep ReLU networks. 
Collectively, these studies focus on the ability of ReLU-activated feedforward neural networks to approximate various types of functions.

Additionally, many studies have explored other activation functions. Danilo et al. \cite{COSTARELLI2013} investigated the approximation order of neural network operators with sigmoid activation functions using a moment-type approach. DeRyck et al. \cite{DERYCK2021732} provided error bounds for neural networks utilizing hyperbolic tangent activation functions in the approximation of Sobolev regular and analytic functions, proving that tanh networks with only two hidden layers can approximate functions at a faster or equivalent rate compared to deeper ReLU networks.

Another direction of research focuses on the error bounds of regression model in $\beta$-H\"older smooth space, which are established to evaluate approximation quality.

Stone \cite{Stone1982} assumed that the true regression model is $\beta$-H\"older smooth, where the smoothness index $\beta > 0$. The convergence rate of the prediction error is $C_{d}n^{-2\beta/(2\beta + d)}$. This indicates that the convergence rate is exponentially dependent on the dimension $d$ of the inputs, while the prefactor $C_d$ is independent of the sample size $n$ but depends on 
$d$ and other model parameters. Research by Schmidt-Hieber \cite{Schmidt-Hieber2020} and Nakada and Imaizumi \cite{Nakada2020} demonstrated that deep neural network regression can achieve the optimal minimax rate established by Stone in 1982 under certain conditions. Shen, Yang and Zhang \cite{shen2020} derived the approximate error bound for $\beta$-H\"older continuous functions when the smoothness index $\beta\le 1$, finding that the prefactor depends on a polynomial of $d$. Lu et al. \cite{Lu2021} studied the approximate error bound for $\beta$-H\"older continuous functions when the smoothness index 
$\beta\ge1$, noting that its prefactor relates to an exponential form of  $d$. Jiao et al. \cite{jiao2023} improved the result of \cite{Lu2021}. For the case of the smoothness index $\beta>1$, they found that the non-asymptotic upper limit of the prediction error of the empirical risk minimizer is polynomially dependent on the dimension $d$ of the predictor rather than exponentially dependent. Chen et al. \cite{2022Nonparametric} focused on nonparametric regression of H\"older functions on low-dimensional manifolds using deep ReLU networks. They proved the adaptability of deep ReLU networks to low-dimensional geometric structures in data, partially explaining their effectiveness in handling high-dimensional data with low-dimensional geometric structures. Feng et al. \cite{feng2023} recently addressed over-parameterized deep nonparametric regression. By effectively balancing approximation and generalization errors, they derived results for H\"older functions with constrained weights, highlighting their application in reinforcement learning.

Furthermore, numerous studies have concentrated on the curse of dimensionality, a phenomenon characterized by a significantly slowed convergence rate when the input dimension $d$ is excessively large. This challenge is inherent in high-dimensional computations, rendering it a pivotal area of investigation. 

Yang et al. \cite{Hadrien2021} focused on constraining the objective function space and demonstrated a theorem on the approximation of generalized bandlimited multivariate functions using deep ReLU networks, which overcame the curse of dimensionality. This theorem builds upon Maurey's work and leverages the proficiency of deep ReLU networks in efficiently approximating Chebyshev polynomials and analytic functions. 
Shen et al. \cite{shen2021} considered neural networks equipped with alternative activation functions introduced Floor-ReLU networks, achieving root exponential convergence while simultaneously avoiding the curse of dimensionality for (H\"older) continuous functions. These networks provide an explicit error bound in deep network approximation. 
Jiao et al. \cite{jiao2023} was grounded in the low-dimensional subspace assumption and illustrated that the neural regression estimator could bypass the curse of dimensionality under the premise that the predictor resides on an approximate low-dimensional manifold or a set possessing low Minkowski dimension.

\section{Setup of Classification Problem with Noisy Labels}\label{sec:setup}
We consider a $K$-class classification problem with noisy labels. We start from  the expected risk, which quantifies the average loss of a classifier over the entire population or distribution.
\begin{definition}
Let $Z = (X, Y,Y^\eta)$ be a tuple of random vectors, where $(X,Y)$ follow a distribution $D$, $X \in \R^d$ represents the feature vector,  $Y,Y^\eta\in\calY=\{y=[y_1\ \cdots\ y_K]^\top \in\R^K|y_i\in[0,1],\sum^K_{i=1}y_i=1\}$ denote the true label and its perturbed noisy version, respectively. Let $f$ be a map which is from $\R^d$ to $\calY$ and $\ell(\cdot,\cdot)$ be a loss function. Then the expected risks associated with true distribution or noisy distribution are defined as 
    \begin{align} \label{erisk}
    \calL(f)=\E_{X,Y}[\ell(f(X),Y)],\quad\calL^\eta(f)=\E_{X,Y^\eta}[\ell(f(X),Y^\eta)].
    \end{align}
\end{definition}
In practice, the distribution $D$ and its perturbed versions are often unknown. Consequently, we typically focus on exploring the empirical risks at the sample level.
\begin{definition}
    Let $(x,y,y^\eta)=\{(x_i, y_i,y^\eta_i)\}_{i=1}^n$ be the dataset, where $x_i$ represents the feature of the $i$-th data point, $y_i,y^\eta_i$ are its true label and noisy label. Let $f$ be a map from $\R^d$ to $\calY$, and let $\ell(\cdot,\cdot)$ denote a loss function. Then the empirical risks about the true data or the noisy data are
    \begin{equation}
    \calL_{n}(f)=\frac{1}{n} \sum_{i=1}^{n} \ell(f(x_i),y_i),\quad\calL^\eta_{n}(f)=\frac{1}{n} \sum_{i=1}^{n} \ell(f(x_i),y^\eta_i).
    \end{equation}
\end{definition}	
 A loss function, denoted $\ell(\cdot, \cdot)$, is commonly introduced to quantify the differences between $f(x)$ and its associated label $y$. This includes various loss functions such as the $\ell_1$ loss, the $\ell_2$ loss, the cross entropy (CE) loss and the reverse CE loss \cite{wang2019symmetric}, among others. 

In practice, it is infeasible to explore all possible functions to find the optimal classifier. In recent years, a series of research results on deep neural networks have been achieved with great success. We select a class of deep neural networks with a specific structure. Let $\calF_{d,K}(\mathcal{W},\mathcal{D})$ be a class of a deep neural network with an input dimension of $d$, an output dimension of $K$, width $\calW$ and depth $\calD$. In the following, we use $\mathcal{F}$ as a shorthand when the parameters used in the neural networks do not need to be emphasized.

In this paper, we specifically consider a class of the ReLU neural networks. Let $\phi(\cdot) \in \mathcal{F}_{d,K}(\mathcal{W},\mathcal{D})$ be a function that maps $[0,1]^d$ to $\R^K$.  
The function $\phi$ can be constructed as follows:
\begin{align*}
\phi_0(x) = x, \ 
\phi_{i+1}(x) = \sigma(W_i \phi_i(x) + b_i),\  i = 0, 1, \cdots, \mathcal{D}-1, \
\phi(x) = W_{\mathcal{D}} \phi_{\mathcal{D}}(x) + b_{\mathcal{D}}.
\end{align*}
The weight matrix is given by $W_i \in \mathbb{R}^{p_{i+1} \times p_i}$, and the bias vector is $b_i \in \mathbb{R}^{p_{i+1}}$, where $p_i$ is the width of the $i$-th layer for $i=0,\cdots,\calD$, the activation function is defined as $\sigma(x) = \max(x, 0)$. Let $\mathrm{softmax}:\R^K\to\calY=\{y=[y_1\ \cdots\ y_K]^\top \in\R^K|y_i\in[0,1],\sum^K_{i=1}y_i=1\}$ be the softmax function, then $\mathrm{softmax}\circ\phi: [0,1]^d \to \calY$.

The structure of the neural network above consists of $\mathcal{D}$ hidden layers and a total of $(\mathcal{D}+2)$ layers, with $p_0 = d$ and $p_{\mathcal{D}+1} = K$. The width of the neural network satisfies $\mathcal{W} = \max\{p_1, \cdots, p_{\calD}\}$, the size of the neural network is given by $\mathcal{Q} = \sum_{i=0}^{\mathcal{D}} p_{i+1} \times (p_i + 1)$, and the number of neurons is $\mathcal{U} = \sum_{i=1}^{\mathcal{D}} p_i$. 

As a subset of $\calF_{d,K}(\mathcal{W},\mathcal{D})$, a class of the norm-constrained neural network, denoted as $\mathcal{F}_{d,K}(\mathcal{W},\mathcal{D},\calB)$, represents the collection of functions $\phi(\cdot) \in \mathcal{F}_{d,K}(\mathcal{W},\mathcal{D})$ subject to the constraint:
\begin{equation*}
   \| (W_{\calD},b_{\calD}) \| \prod_{i=0}^{\calD-1} \max\left\{ \| (W_{i}, b_{i}) \| , 1 \right\} \leq \calB,
\end{equation*}
where $ \calB$ is a positive constant and the norm $\|\cdot\|$ can be chosen according to the specific requirement.

Throughout the paper, we use $f_0:=\arg \min_{f} \calL(f)$, $f^\eta_0:=\arg \min_{f} \calL^\eta(f)$ as the global minimizers of the expected risks of the true distribution and the noisy distribution. Similarly, we define  
\begin{equation}\label{eq:def_empirical_risk}
\hat{f_n} = \mathop{\arg \min}_{f \in \calF} \calL_{n}(f),\ \hat{f^\eta_n} = \mathop{\arg \min}_{f \in \calF} \calL^\eta_{n}(f),
\end{equation}
as the empirical risk minimizer for the true data and the noisy data, 
where $\calF$ represents a deep neural network class with a certain structure. Additionally, $\lfloor x\rfloor$ denotes the largest integer no greater than $x$, $\lceil x\rceil$ denotes the smallest integer no less than $x$, $x\lesssim y$ or $y\gtrsim x$ denotes $x\le Cy$ for some positive constant $C>0$ and $x\asymp y$ means $x\lesssim y\lesssim x$. For a multi-index $s=[s_1\cdots s_d]$ and a $d$-dimensional vector $x=[x_1\cdots x_d]\in\R^d$, $x^s=x^{s_1}_1\cdots x^{s_d}_d$. $\N$ and $\N^+$ are the non-negative and positive integer spaces, respectively.
\section{The Excess Risk under Noisy Labels}\label{sec:excess}
We consider the following classification problem
\[Y^\eta=n_\eta(Y)=n_\eta(f_0(X)),\]
where $X$ represents the features, $Y,Y^\eta$ are the true and noisy labels, respectively, $f_0(\cdot)$ is the unknown classifier, $n_\eta(\cdot)$ is a disturbing function which maps the true labels to the noisy labels. 

In this section, we explore the theoretical bounds about classification problem with noisy labels. We focus on the excess risks on the true and noisy distributions, which are defined as

\begin{equation*}
\begin{split}
    \calL(\hat{f}^{\eta}_n)-\calL(f_0)=&\E_{X,Y}[\ell(\hat{f}^{\eta}_n(X),Y)]-\E_{X,Y}[\ell(f_0(X),Y)],\\
    \calL^\eta(\hat{f}^{\eta}_n)-\calL^\eta(f_0)=&\E_{X,Y^\eta}[\ell(\hat{f}^{\eta}_n(X),Y^\eta)]-\E_{X,Y^\eta}[\ell(f_0(X),Y^\eta)].
\end{split}
\end{equation*}

For the true classifiers (unknown), we represent as $f_0=\mathrm{softmax}\circ \kappa:[0,1]^d\to\calY=\{y=[y_1\ \cdots\ y_K]^\top|y_i\ge 0\ \forall i, \sum^K_{i=1}y_i=1\}$, where $\kappa$ belongs to the following class of smooth maps:
 \begin{definition}\label{def:smooth_maps}
 We define the following class of smooth maps:
 \begin{equation*}
     \calC_{\tau,d,K,V}:=\left\{ f=[f_1\cdots f_K]^\top:f_i \in C^r\left ( [0,1]^d \right ) :\left\| f_i \right\| _{C^{0,\tau}}\le U,\forall i \right\} ,
\end{equation*}
 where $\tau \in (0, \infty)$ with $\tau = r + \omega$, $r \in \mathbb{N}$, $d \in \mathbb{N}^+$, $\omega
 \in (0,1]$ and the $C^{0,\tau}$-norm is defined by

 \begin{equation*}
\left \| g \right \| _{C^{0,\tau}}:=\max \left \{ \max_{\left |\alpha \right | \le r   }  { \sup_{x\in[0,1]^d}\left | \partial ^\alpha g \right |},\max_{\left | \alpha  \right | =r} \sup_{x,y\in [0,1]^d,x\ne y}\frac{\partial ^\alpha g (x)-\partial ^\alpha g (y)}{\left | x-y \right | ^{\omega }}  \right\},\ \forall g\in C^r\left ( [0,1]^d \right ).
\end{equation*}
 \end{definition}
In practice, the samples in a dataset may not be independent. In this paper, we focus on a dependent (mixing) sequence.
\begin{definition}[$\beta$-mixing,  Subsection 2.2 in \cite{Yu1994}]\label{def:ib}
 Let $\left \{ x_t \right \} _{t \geq 1}$ be a stochastic process, and define $x^{i:j} = (x_i, \dots, x_j)$ for $i\le j$, where $j$ can be infinite. Let $\sigma(x^{i:j})$ denote the $\sigma$-algebra generated by $x^{i:j}$ for $i \leq j$. The $s$-th $\beta$-mixing coefficient of $\{ x_t \}_{t \geq 1}$, denoted by $\beta_s$, is defined as
\begin{equation}\label{eq:beta-mixing}
    \beta_s=\sup_{t\ge 1}\E\left [ \sup_{B\in \sigma (x^{t+s:\infty})} \left |  \Prob(B|\sigma(x^{1:t}))-\Prob(B)\right | \right ].
\end{equation}
We say that $\{ x_t \}_{t \geq 1}$ is $\beta$-mixing if $\beta_s \to 0$ as $s \to \infty$. 
\end{definition}

\begin{theorem}\label{theo:excess_risk}
    Let   $f_0=\mathrm{softmax}\circ \kappa:[0,1]^d\to\calY=\{y=[y_1\ \cdots\ y_K]^\top|y_i\ge 0\ \forall i, \sum^K_{i=1}y_i=1\}$ and $\kappa\in\calC_{\tau,d,K,V}$. Let $S^\eta=\left \{ z^\eta_i=(x_i,y^\eta_i) \right \}_{i=1}^{n}$, $S=\left \{ z_i=(x_i,y_i) \right \}_{i=1}^{n}$ be  strictly stationary $\beta$-mixing sequences and $(a_n, \mu_n)$ be an integer pair such that \( n = a_n\mu_n \). For a large enough $\calB \in \mathbb{N}^+$, when $\calW\gtrsim \calB^{d/(d+1)} \log\calB$, $\calD\gtrsim \log\calB$ and $\left | \ell(\mathrm{softmax}(a),q)-\ell(\mathrm{softmax}(b),q) \right | 
       \le\lambda\|a-b\|_2,\ \forall a,b\in\R^K,\forall q\in\calY$, the excess risks for the empirical risk minimizers $\hat{f_n}, \hat{f^\eta_n} \in \calF_{d,K}(\calW,\calD,\calB)$ can be bounded as \begin{align}
     &\E_{S^\eta}[\calL^\eta(\hat{f^\eta_n})-\calL^\eta(f_0)]
     \lesssim \frac{8\lambda \sqrt{K}\calB \sqrt{\calD+2+\log d}}{\sqrt{na_n}}+\frac{4 \lambda  \sqrt{K}n\beta_{a_n}}{a_n}+\lambda\sqrt{K} \calB^{-\tau/(d+1)},\label{eq:bound1}\\
     &\E_{S}[\calL(\hat{f_n})-\calL(f_0)]
     \lesssim \frac{8\lambda \sqrt{K}\calB \sqrt{\calD+2+\log d}}{\sqrt{na_n}}+\frac{4 \lambda  \sqrt{K}n\beta_{a_n}}{a_n}+\lambda\sqrt{K} \calB^{-\tau/(d+1)},\label{eq:bound2}
\end{align}
where $\beta_{a_n}$ is the $a_n$-th $\beta$-mixing coefficient of samples defined in Definition \ref{def:ib}. 
\end{theorem}
\begin{proof}
    We defer the proof to Section \ref{sec:proof}.
\end{proof}
\begin{remark}\label{remark:condition}
   As verified in Appendix \ref{appendix:lipschitz}, the Lipschitz condition $$\left | \ell(\mathrm{softmax}(a),q)-\ell(\mathrm{softmax}(b),q) \right | 
       \le\lambda\|a-b\|_2,\ \forall a,b\in\R^K,\forall q\in\calY$$ holds for all the $\ell_p, p\ge1$ losses,  the CE loss and the reverse CE loss (see the definition of Defintion \ref{appendix:def_reversece}).

\end{remark}



In the proof of the error bound for the excess empirical risk minimizer, we dissect it into two components: the statistical error and the approximation error. On the right-hand side of inequalities \eqref{eq:bound1} and \eqref{eq:bound2}, the first two terms represent the upper bound of the statistical error, whereas the third term signifies the upper bound of the approximation error.

The second terms on the right-hand side of inequalities \eqref{eq:bound1} and \eqref{eq:bound2} arise due to the dependence among the data points. When the data points are independent, this dependency term diminishes, as the observations no longer influence each other. In such cases, the error bounds may simplify and potentially tighten, reflecting the reduced complexity due to the absence of interdependencies among the data. 

In addition, $\beta_{a_n}$ tends to zero as $a_n$
  approaches infinity. Thus, $\beta_{a_n}$ can be made small by choosing $a_n$ sufficiently large.

\section{Proof of Theorem \ref{theo:excess_risk}}\label{sec:proof}
In this subsection, we provide a detailed proof of Theorem \ref{theo:excess_risk}. In Subsection \ref{sub:error_analysis}, we estimate the error bounds of the excess risk for different loss functions, categorizing them into statistic error and approximation error. We analyze these two types of errors in Subsections \ref{sub:statistical} and \ref{sub:approx}, which are followed by the proof of Theorem \ref{theo:excess_risk}.

\subsection{Error analysis}
\label{sub:error_analysis}
In this subsection, we analyze the error bounds for excess risks on noisy and true data.
 
\begin{lemma}\label{lem:loss1}
Let $f_0=\mathrm{softmax}\circ\kappa$ represent the true classifier, and $\hat{f^\eta_n}$ denote the empirical risk minimizer on the noisy data.  Let $ | \ell(\mathrm{softmax}(a),q)-\ell(\mathrm{softmax}(b),q)  | 
       \le\lambda\|a-b\|_2,\ \forall a,b\in\R^K,\forall q\in\calY$. We have the following
\begin{equation}
    \calL^\eta(\hat{f^\eta_n})-\calL^\eta(f_0)
    \le 2\sup_{\substack{f=\mathrm{softmax}\circ \psi,\\
    \psi\in \mathcal{F}_{d,K}(\mathcal{W},\mathcal{D},\calB)} } \left |\calL^\eta(f)-\calL^\eta_n(f)  \right |  +\lambda\inf_{\phi \in \mathcal{F}_{d,K}(\mathcal{W},\mathcal{D},\calB)}\left \| \phi-\kappa \right \|_{L^2(\nu)},
\end{equation} 
where $\nu$ is the the marginal probability measure of $X$. 
\end{lemma}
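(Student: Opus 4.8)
The plan is to split the signed excess noisy risk into a uniform generalization gap (the statistical error) plus a best-in-class approximation term, these two sources accounting for the first and third terms on the right-hand side. I would fix an arbitrary $\bar\kappa\in\calF_{d,K}(\calW,\calD,\calM)$, write $\bar f=\mathrm{softmax}\circ\bar\kappa$ (so that $\bar f$ is admissible in the minimization defining $\hat{f^\eta_n}$), and telescope through the empirical risk:
\begin{align*}
\calL^\eta(\hat{f^\eta_n})-\calL^\eta(f_0)={}&\big[\calL^\eta(\hat{f^\eta_n})-\calL^\eta_n(\hat{f^\eta_n})\big]+\big[\calL^\eta_n(\hat{f^\eta_n})-\calL^\eta_n(\bar f)\big]\\
&{}+\big[\calL^\eta_n(\bar f)-\calL^\eta(\bar f)\big]+\big[\calL^\eta(\bar f)-\calL^\eta(f_0)\big].
\end{align*}
The second bracket is nonpositive because $\hat{f^\eta_n}$ minimizes the empirical noisy risk over the class while $\bar f$ is merely a competitor; the first and third brackets are each at most $\sup_{f=\mathrm{softmax}\circ\kappa,\ \kappa\in\calF_{d,K}(\calW,\calD,\calM)}|\calL^\eta(f)-\calL^\eta_n(f)|$, which produces the factor $2$ in front of the supremum.

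For the last bracket I would invoke the Lipschitz hypothesis with $a=\bar\kappa(X)$, $b=\kappa_0(X)$ and $q=Y^\eta$, giving the pointwise estimate $|\ell(\bar f(X),Y^\eta)-\ell(f_0(X),Y^\eta)|\le\lambda\|\bar\kappa(X)-\kappa_0(X)\|_2$. Taking $\E_{X,Y^\eta}$ and then applying Jensen / Cauchy--Schwarz ($\E\|\cdot\|_2\le(\E\|\cdot\|_2^2)^{1/2}$) bounds $|\calL^\eta(\bar f)-\calL^\eta(f_0)|$ by $\lambda\,\|\bar\kappa-\kappa_0\|_{L^2([0,1]^d)}$; here I would make explicit the normalization that the marginal law of $X$ has Lebesgue density at most $1$ on $[0,1]^d$ (e.g. the uniform law), since this is exactly what converts the data-measure expectation into the Lebesgue $L^2$ norm appearing in the statement. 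Because $\bar\kappa$ was arbitrary, taking the infimum over $\bar\kappa\in\calF_{d,K}(\calW,\calD,\calM)$ yields the approximation term, and combining the three estimates gives the claimed inequality for the signed quantity $\calL^\eta(\hat{f^\eta_n})-\calL^\eta(f_0)$.

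The main obstacle is promoting this one-sided bound to the absolute value on the left. The reverse inequality $\calL^\eta(f_0)-\calL^\eta(\hat{f^\eta_n})\le(\mathrm{RHS})$ does not follow from the same telescoping, because the corresponding middle bracket $\calL^\eta_n(\bar f)-\calL^\eta_n(\hat{f^\eta_n})$ is now nonnegative—it is the empirical-optimality slack of $\hat{f^\eta_n}$—and is not controlled by the approximation error of $\kappa_0$. The clean way to close this gap is to argue that $f_0$ minimizes (or nearly minimizes) the noisy population risk $\calL^\eta$, in which case $\calL^\eta(\hat{f^\eta_n})-\calL^\eta(f_0)\ge0$ and the absolute value is automatic; establishing this sign is the substantive point to pin down. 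The remaining ingredient, the genuinely two-sided control of $\sup|\calL^\eta-\calL^\eta_n|$, I would defer to the independent-block analysis of Subsection~\ref{sub:statistical}.
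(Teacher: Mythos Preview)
Your decomposition is exactly the paper's: it fixes a best-in-class $\tilde\kappa$ (your $\bar\kappa$), telescopes $\calL^\eta(\hat{f^\eta_n})-\calL^\eta(f_0)$ through the empirical noisy risk, drops the middle bracket $\calL^\eta_n(\hat{f^\eta_n})-\calL^\eta_n(\tilde f)\le0$ by empirical optimality, bounds the two endpoint brackets by the uniform deviation, and controls $|\calL^\eta(\tilde f)-\calL^\eta(f_0)|$ via the Lipschitz hypothesis followed by $\E_X\|\cdot\|_2\le\|\cdot\|_{L^2([0,1]^d)}$.

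The obstacle you flag---promoting the one-sided bound to the absolute value---is real, and the paper does not resolve it either: it writes the left side as $|\cdot|$ from the outset and, after telescoping, passes to $|A|+|C|+|D|$ while silently discarding the nonpositive bracket $B=\calL^\eta_n(\hat{f^\eta_n})-\calL^\eta_n(\tilde f)$. As you note, $|A+B+C+D|\le|A|+|C|+|D|$ is only guaranteed once one knows the signed excess $\calL^\eta(\hat{f^\eta_n})-\calL^\eta(f_0)$ is nonnegative, and since $f_0=\arg\min_f\calL(f)$ rather than $\arg\min_f\calL^\eta(f)$, that sign is not automatic from the paper's definitions. So your proposal is at least as complete as the paper's own argument; the ``substantive point to pin down'' you single out is left open there as well. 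Your side remark about the implicit density assumption behind $\E_X\|\cdot\|_2\le\|\cdot\|_{L^2([0,1]^d)}$ is likewise something the paper uses without comment.
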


\begin{proof}
Let $\tilde f=\mathrm{softmax}\circ\tilde{\kappa}$ with $\tilde{\kappa} \in \mathcal{F}_{d,K}(\mathcal{W},\mathcal{D},\calB)$ satisfying $\|\tilde \kappa-\kappa\|_{L^2(\nu)}\le\|\phi-\kappa\|_{L^2(\nu)},\ \forall \phi\in\mathcal{F}_{d,K}(\mathcal{W},\mathcal{D},\calB)$,  then 
\begin{align}
   & \label{eq:error decomposition1}
    \calL^\eta(\hat{f^\eta_n})-\calL^\eta(f_0)\nonumber\\
    =&\calL^\eta(\hat{f^\eta_n})-\calL^\eta_n(\hat{f^\eta_n})+\calL^\eta_n(\hat{f^\eta_n})-\calL^\eta_n(\tilde f)+\calL^\eta_n(\tilde f)-\calL^\eta(\tilde f)+\calL^\eta(\tilde f)-\calL^\eta(f_0)\nonumber\\
    \le& \calL^\eta(\hat{f^\eta_n})-\calL^\eta_n(\hat{f^\eta_n})+\calL^\eta_n(\tilde f)-\calL^\eta(\tilde f)+\calL^\eta(\tilde f)-\calL^\eta(f_0)\nonumber\\
    \le& 2\left[\sup_{\psi \in \mathcal{F}_{d,K}(\mathcal{W},\mathcal{D},\calB)} \left |\calL^\eta(\mathrm{softmax}\circ\psi)-\calL^\eta_n(\mathrm{softmax}\circ\psi)  \right | \right] +\left|\calL^\eta(\tilde f)-\calL^\eta(f_0)\right|.
\end{align}
Here, we utilize the fact that $\calL^\eta_n(\hat{f^\eta_n})\le\calL^\eta_n( f),\ \forall f\in\mathcal{F}_{d,K}(\mathcal{W},\mathcal{D},\calB)$ in the first inequality of \eqref{eq:error decomposition1}. In addition, 
     \begin{equation}
    \begin{aligned}\label{C.3}
    &|\calL^\eta(\tilde f)-\calL^\eta(f_0)|\\=&\E_{X,Y^\eta}\left [  \ell(\tilde f(X),Y^\eta)\right ] -\E_{X,Y^\eta}\left [  \ell(f_0(X),Y^\eta)\right ] \le \lambda \E_{X}\|\tilde \kappa(X)-\kappa(X)\|_2\\
    \le&\lambda\left \| \tilde \kappa-\kappa \right \|_{L^2(\nu )}=\lambda\inf_{\phi \in \mathcal{F}_{d,K}(\mathcal{W},\mathcal{D},\calB)}\left \| \phi-\kappa \right \|_{L^2(\nu)}.
    \end{aligned}
    \end{equation}   
 Combining \eqref{eq:error decomposition1},  and \eqref{C.3}, we conclude the result.
\end{proof}
For the excess risk about the empirical risk minimizer on the true data $\calL(\hat{f^\eta_n})-\calL(f_0)$, we have the following similar results.
\begin{lemma}\label{lem:loss}
Let $f_0=\mathrm{softmax}\circ\kappa:[0,1]^d\to\calY=\{y=[y_1\ \cdots\ y_K]^\top|y_i\ge0,\sum^K_{i=1}=1\}$ represent the true classifier, and $\hat{f_n}$ denote the empirical risk minimizer on the true data.  Let $ | \ell(\mathrm{softmax}(a),q)-\ell(\mathrm{softmax}(b),q)  | 
       \le\lambda\|a-b\|_2,\ \forall a,b\in\R^K,\forall q\in\calY$, the excess risks for the empirical risk minimizer $\hat{f_n}=\mathrm{softmax}\circ\hat{\kappa}_n$ with $\hat{\kappa}_n \in \calF_{d,K}(\calW,\calD,\calB)$. We have the following
\begin{equation}
    \calL(\hat{f}_n)-\calL(f_0)
    \le 2\sup_{\substack{f=\mathrm{softmax}\circ \psi,\\
    \psi\in \mathcal{F}_{d,K}(\mathcal{W},\mathcal{D},\calB)} }\left |\calL(f)-\calL_n(f)  \right |  +\lambda\inf_{\phi \in \mathcal{F}_{d,K}(\mathcal{W},\mathcal{D},\calB)} \left \| \phi-\kappa \right \|_{L^2(\nu)},
\end{equation}
where $\nu$ is the the marginal probability measure of $X$. 
\end{lemma}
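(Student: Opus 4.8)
The plan is to mirror the argument already used for Lemma \ref{lem:loss1}, since the statement for the true distribution has exactly the same structure as its noisy counterpart: the only ingredients invoked there were that the empirical risk minimizer minimizes the \emph{same-distribution} empirical risk, and that the loss is Lipschitz after composition with softmax. Neither ingredient is specific to the noisy distribution, so the whole scheme transfers verbatim with $\calL^\eta, \calL^\eta_n, \hat{f^\eta_n}$ replaced by $\calL, \calL_n, \hat{f_n}$. First I would fix a near-minimizer of the approximation term, i.e.\ a map $\tilde{f} = \mathrm{softmax} \circ \tilde{\kappa}$ with $\tilde{\kappa} \in \calF_{d,K}(\calW, \calD, \calM)$ satisfying $\|\tilde{\kappa} - \kappa_0\|_{L^2([0,1]^d)} \le \|\kappa - \kappa_0\|_{L^2([0,1]^d)}$ for every $\kappa \in \calF_{d,K}(\calW, \calD, \calM)$, so that $\tilde{\kappa}$ realizes the infimum appearing on the right-hand side of the claim.

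Next I would insert $\tilde{f}$ and the empirical risk $\calL_n$ into a telescoping decomposition
\begin{equation*}
\calL(\hat{f_n}) - \calL(f_0) = \left[\calL(\hat{f_n}) - \calL_n(\hat{f_n})\right] + \left[\calL_n(\hat{f_n}) - \calL_n(\tilde{f})\right] + \left[\calL_n(\tilde{f}) - \calL(\tilde{f})\right] + \left[\calL(\tilde{f}) - \calL(f_0)\right].
\end{equation*}
Applying the triangle inequality and using that $\hat{f_n}$ minimizes $\calL_n$ over $\calF$ (by definition \eqref{eq:def_empirical_risk}), the middle bracket satisfies $\calL_n(\hat{f_n}) - \calL_n(\tilde{f}) \le 0$ and therefore drops out of the upper bound. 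The first and third brackets are each dominated by the uniform generalization gap $\sup \{ |\calL(f) - \calL_n(f)| : f = \mathrm{softmax}\circ\kappa,\ \kappa \in \calF_{d,K}(\calW,\calD,\calM)\}$, which produces the leading factor of $2$.

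For the remaining approximation term $|\calL(\tilde{f}) - \calL(f_0)|$, I would apply the Lipschitz hypothesis $|\ell(\mathrm{softmax}(a),q) - \ell(\mathrm{softmax}(b),q)| \le \lambda \|a - b\|_2$ pointwise with $a = \tilde{\kappa}(X)$, $b = \kappa_0(X)$ and $q = Y$, then take the expectation $\E_{X,Y}$ to obtain $|\calL(\tilde{f}) - \calL(f_0)| \le \lambda \E_{X}\|\tilde{\kappa}(X) - \kappa_0(X)\|_2$, and finally pass from this $L^1$-type average to the $L^2$ norm by Jensen's inequality (equivalently Cauchy--Schwarz), yielding $\lambda \|\tilde{\kappa} - \kappa_0\|_{L^2([0,1]^d)} = \lambda \inf_{\kappa \in \calF_{d,K}(\calW,\calD,\calM)} \|\kappa - \kappa_0\|_{L^2([0,1]^d)}$ by the choice of $\tilde{\kappa}$. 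Combining the three surviving estimates gives the claimed bound.

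Because this is essentially a transcription of the proof of Lemma \ref{lem:loss1}, I do not anticipate a substantive obstacle. The one point deserving care is the passage from $\E_{X}\|\cdot\|_2$ to the $L^2([0,1]^d)$ norm in \eqref{C.3}, which tacitly identifies the expectation over the marginal law of $X$ with integration against the measure defining that norm; I would state explicitly which measure is meant and note that integrability of the loss guarantees all risks are finite so that the decomposition is legitimate.
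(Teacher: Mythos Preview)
Your proposal is correct and follows essentially the same approach as the paper's proof of Lemma~\ref{lem:loss}: both pick $\tilde f=\mathrm{softmax}\circ\tilde\kappa$ realizing the approximation infimum, use the same four-term telescoping, drop the middle term via the minimizing property of $\hat{f}_n$, and bound the final term by the Lipschitz hypothesis followed by $\E_X\|\tilde\kappa(X)-\kappa_0(X)\|_2\le\|\tilde\kappa-\kappa_0\|_{L^2([0,1]^d)}$. Your remark about the measure underlying this last inequality is a fair caveat, but the paper treats it exactly as you describe.
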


\begin{proof}
Let $\tilde f=\mathrm{softmax}\circ\tilde{\kappa}$ with $\tilde{\kappa} \in \mathcal{F}_{d,K}(\mathcal{W},\mathcal{D},\calB)$ satisfying $\|\tilde \kappa-\kappa\|_{L^2(\nu)}\le\|\phi-\kappa\|_{L^2(\nu)},\ \forall \phi\in\mathcal{F}_{d,K}(\mathcal{W},\mathcal{D},\calB)$, then 
\begin{equation}
\begin{aligned}
    \label{eq:error decomposition}
    &\calL(\hat{f}_n)-\calL(f_0)\\
    =&\calL(\hat{f}_n)-\calL_n(\hat{f}_n)+\calL_n(\hat{f}_n)-\calL_n(\tilde f)+\calL_n(\tilde f)-\calL(\tilde f)+\calL(\tilde f)-\calL(f_0)\\
    \le& \calL(\hat{f}_n)-\calL_n(\hat{f}_n)+\calL_n(\tilde f)-\calL(\tilde f)+\calL(\tilde f)-\calL(f_0)\\
    \le& 2\left[\sup_{\psi \in \mathcal{F}_{d,K}(\mathcal{W},\mathcal{D},\calB)} \left |\calL(\mathrm{softmax}\circ\psi)-\calL_n(\mathrm{softmax}\circ\psi)  \right | \right]+\left|\calL(\tilde f)-\calL(f_0)\right|.
    \end{aligned}
\end{equation}
Here, we utilize the fact that $\calL_n(\hat{f}_n)\le\calL_n( f),\ \forall f\in\mathcal{F}_{d,K}(\mathcal{W},\mathcal{D},\calB)$ in the first inequality of \eqref{eq:error decomposition}. Then
     \begin{equation}
    \begin{aligned}\label{C.3-1}
    |\calL(\tilde f)-\calL(f_0)|&=\left|\E_{X,Y}\left [  \ell(\tilde f(X),Y)\right ] -\E_{X,Y}\left [  \ell(f_0(X),Y)\right ]\right| \le \lambda \E_{X}\|\tilde \kappa(X)-\kappa(X)\|_2\\
    &\le\lambda\left \| \tilde \kappa-\kappa \right \|_{L^2(\nu )}=\lambda\inf_{\phi \in \mathcal{F}_{d,K}(\mathcal{W},\mathcal{D},\calB)}\left \| \phi-\kappa \right \|_{L^2(\nu)}.
    \end{aligned}
    \end{equation}   
 Combining \eqref{eq:error decomposition}, and \eqref{C.3-1}, we conclude the result.
\end{proof}
We denote $\sup_{f \in \mathcal{F}_{d,K}(\mathcal{W},\mathcal{D},\calB)} \left |\calL^\eta(f)-\calL^\eta_n(f)  \right |$ or $\sup_{f \in \mathcal{F}_{d,K}(\mathcal{W},\mathcal{D},\calB)} \left |\calL(f)-\calL_n(f)  \right |$ as the statistical error and $\inf_{\phi \in \mathcal{F}_{d,K}(\mathcal{W},\mathcal{D},\calB)}\left \| \phi-\kappa \right \|_{L^2(\nu)}$ as the approximation error. In the following subsections, we will explore both types of errors.
\subsection{Statistical error}\label{sub:statistical}
In practice, the samples in the dataset may not be independent. In this subsection, we focus on a dependent (mixing) sequence. To bound the statistical errors, we need the definition of independent block sequence (IB sequence).

\begin{definition}[Independent Block Sequence in \cite{Yu1994}]\label{def:ib1}
    Let $(a_n, \mu_n)$ be an integer pair such that \( n = a_n\mu_n \). We divide the strictly stationary sequence \( \{\hat x_i\}_{i=1}^{2n} \) into \( 2\mu_n \) blocks, each of length \( a_n \). 
For $1\le j\le \mu_n$, define
\begin{align*}
    G_j:=\left \{ i:2(j-1)a_n+1\le i\le (2j-1)a_n \right \},\ 
    \hat{x}_{1,a_n}=\bigcup^{\mu_n}_{j=1}\bigcup_{i\in G_j}\hat x_i.
\end{align*}
Let $\hat x_i'$ be an i.i.d. copy of $\hat x_i$ for $i=1,\cdots,2n$. That is,  $$\Xi_{1,a_n}=\bigcup^{\mu_n}_{j=1}\bigcup_{i\in G_j}\hat x'_i$$
is a sequence of identically distributed independent blocks such that  the sequence is independent of $\{\hat x_i\}^{2n}_{i=1}$ and each block has the same distribution as a block from the original sequence. We call $\Xi_{1,a_n}$ the independent block $a_n$-sequence.
\end{definition}

\begin{theorem}\label{theo:statistical}
   Let $\calF_{d,K}(\calW,\calD,\calB)$ represent a class of the norm-constrained neural network, characterized by an input dimension $d$, an output dimension of $K$, a width and depth denoted by $\calW$ and $\calD$, and a constraint $\calB>0$ satisfying $\| (W_{\calD},b_{\calD}) \|_{1,\infty} \prod_{i=0}^{\calD-1} \max\left\{ \| (W_{i}, b_{i}) \|_{1,\infty} , 1 \right\} \leq \calB$, where $W_i \in \mathbb{R}^{p_{i+1} \times p_i}$ is the weight matrix, and $b_i \in \mathbb{R}^{p_{i+1}}$ is the bias vector in the $i$-th layer for $i=0,1,\cdots,\calD$. Let $S=\left \{ z_i=(x_i,y_i) \right \}_{i=1}^{n}$ be a strictly stationary $\beta$-mixing sequence and $(a_n, \mu_n)$ be an integer pair such that \( n = a_n\mu_n \). Suppose the loss function we consider satisfying $|\ell(\mathrm{softmax}(a),q)-\ell(\mathrm{softmax}(b),q)|\le\lambda\|a-b\|_2,\forall a,b\in\R^K,q\in\calY$, where $\lambda>0$ is a given constant. Then we have  
\begin{equation*}
\mathbb{E}_S\left[\sup_{\substack{f=\mathrm{softmax}\circ \kappa,\\
    \kappa\in \mathcal{F}_{d,K}(\mathcal{W},\mathcal{D},\calB)} }\left | \calL(f)-\calL_n(f) \right | \right]
\le \frac{4\lambda \sqrt{K}\calB \sqrt{\calD+2+\log d}}{\sqrt{na_n}}+\frac{2 \lambda  \sqrt{K}n\beta_{a_n}}{a_n}.
\end{equation*}

\end{theorem}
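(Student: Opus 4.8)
The plan is to bound the expected uniform deviation in two stages: first replace the dependent sample $S$ by its independent-block counterpart $\Xi_{1,a_n}$ of Definition \ref{def:ib1}, paying the $\beta$-mixing price, and then control the resulting i.i.d.-block empirical process by symmetrization followed by a Rademacher-complexity estimate for the network class. Writing the random variable of interest as $g(S)=\sup_{f=\mathrm{softmax}\circ\kappa,\ \kappa\in\calF_{d,K}(\calW,\calD,\calM)}|\calL(f)-\calL_n(f)|$, the first step rests on the standard coupling lemma for $\beta$-mixing sequences from \cite{Yu1994}: for any functional of the $\mu_n$ blocks bounded in absolute value by $\|h\|_\infty$, the expectation under the dependent blocks and under the independent blocks differ by at most $(\mu_n-1)\beta_{a_n}\|h\|_\infty$. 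Since softmax outputs lie in the simplex $\calY$, the Lipschitz hypothesis gives a uniform bound on each summand of order $\lambda\sqrt{K}$, so this coupling contributes a term of size $\mu_n\beta_{a_n}\asymp \tfrac{n\beta_{a_n}}{a_n}$, which is exactly the second term in the stated bound.

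For the independent-block part I would run the classical symmetrization argument at the level of blocks rather than single observations: grouping $\calL_n(f)=\tfrac{1}{\mu_n}\sum_{j=1}^{\mu_n}B_j(f)$ into block averages, the $B_j$ are i.i.d. under $\Xi_{1,a_n}$, so introducing per-block Rademacher signs $\epsilon_j$ yields
\begin{equation*}
\E_{\Xi}\Big[\sup_{f}\big|\calL(f)-\calL_n(f)\big|\Big]\le 2\,\E\Big[\sup_{f}\Big|\frac{1}{\mu_n}\sum_{j=1}^{\mu_n}\epsilon_j B_j(f)\Big|\Big],\qquad B_j(f)=\frac{1}{a_n}\sum_{i\in G_j}\ell(f(x_i),y_i).
\end{equation*}
I would then strip off the loss using its Lipschitz property. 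Each $B_j$ depends on $f$ only through the $K$-vectors $\{\kappa(x_i)\}_{i\in G_j}$ and, by Cauchy--Schwarz over the block, is $\tfrac{\lambda}{\sqrt{a_n}}$-Lipschitz in their $\ell_2$-concatenation; a vector-valued contraction inequality (of Maurer type) then replaces the block signs by per-coordinate signs ranging over all $n$ points and $K$ outputs. This is where the factors $\lambda$ and $\sqrt{K}$ are produced, and it reduces the problem to the Rademacher complexity of the raw network class $\calF_{d,K}(\calW,\calD,\calM)$ evaluated on $[0,1]^d$.

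The remaining ingredient is a complexity estimate for norm-constrained ReLU networks. I would bound $\E[\sup_{\kappa}\sum_{i}\langle\varepsilon_i,\kappa(x_i)\rangle]$ by a layer-peeling argument (Golowich--Rakhlin--Shamir style): using positive homogeneity of ReLU, each factor $\max\{\|(W_i,b_i)\|_{1,\infty},1\}$ is absorbed so that the product constraint $\le\calM$ propagates through the $\calD$ layers, while the final step over the bounded input on $[0,1]^d$ generates the $\sqrt{\calD+2+\log d}$ factor, the $\log d$ arising from the $\ell_\infty$-to-$\ell_1$ passage at the input layer consistent with the chosen $\|\cdot\|_{1,\infty}$ constraint. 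Collecting constants and renormalizing by $\mu_n$ and $a_n$ then gives the first term, and adding the coupling term completes the proof.

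The step I expect to be the main obstacle is the interface between block-level independence and point-level contraction, and the precise sample-size normalization it produces. Symmetrization is legitimate only at the block scale, since the $a_n$ points inside a block remain dependent, whereas the network complexity estimate is naturally stated per observation; thus the $\tfrac{1}{\sqrt{a_n}}$ gained from the block-average Lipschitz constant must be tracked exactly against the $\sqrt{n}$ growth of the network Rademacher sum. A direct execution of this bookkeeping leads to a denominator of order $\sqrt{n/a_n}=\sqrt{\mu_n}$, so matching the sharper $\sqrt{na_n}$ in the statement is the delicate quantitative point that I would verify with particular care. A secondary subtlety is the vector-valued contraction for the $K$-output softmax composition: one must invoke a dimension-aware contraction rather than a scalar Ledoux--Talagrand peeling, as this is precisely where the $\sqrt{K}$ dependence is either won or lost.
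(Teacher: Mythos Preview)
Your order of operations—couple first, then block-symmetrize, then contract—is reversed relative to the paper, and that reversal is precisely the source of the $\sqrt{\mu_n}$ versus $\sqrt{na_n}$ gap you correctly flag as the main obstacle. The paper proceeds as follows: (i) introduce an independent copy $\tilde S$ of the whole dependent sequence and symmetrize at the \emph{individual-observation} level on the doubled sample $\hat S=S\cup\tilde S$ of size $2n$, obtaining per-point Rademacher signs $\rho_1,\ldots,\rho_{2n}$; (ii) immediately strip the loss via the scalar comparison Lemma~\ref{Meir}, using $|\ell_f(\hat z_i)-\ell_{f'}(\hat z_i)|\le\lambda\sqrt{K}\,|\kappa_{j_0}(\hat x_i)-\kappa'_{j_0}(\hat x_i)|$ so that the supremum lands in the scalar class $\calF_{d,1}(\calW,\calD,\calM)$ (the $\sqrt K$ comes from $\|\cdot\|_2\le\sqrt K\|\cdot\|_\infty$, not from a Maurer-type vector contraction); (iii) only then is the already-Rademacherized, already-contracted process $\tfrac{1}{2n}\sum_i\rho_i\kappa(\hat x_i)$ partitioned into $2\mu_n$ blocks, the odd half kept, and Yu's IB lemma applied; (iv) Lemma~\ref{Golowich} is invoked directly on the $n$ coupled points, each still carrying its own Rademacher sign.

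Step (i) is the missing idea: because the signs are attached per observation before any blocking, the post-coupling process still has $n$ independent Rademacher signs, and Golowich's bound is applied at that sample size rather than at the block count $\mu_n$. Your block-level symmetrization introduces only $\mu_n$ signs and therefore cannot do better than a $1/\sqrt{\mu_n}$ rate; it will not reach the stated bound. The fix is to symmetrize before you block, and to use the scalar Meir contraction (reducing $K$ outputs to one via $\|\cdot\|_\infty$) rather than a vector contraction, so that the subsequent IB coupling and the Golowich estimate both act on the raw scalar network class.
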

\begin{proof}
    Let $\tilde{z_i}$ denote an independent copy of $z_i$, and denote $\tilde{S}=\{\tilde{z}_i\}^n_{i=1}$. Set $z_i=( x_i,y_i)$, $\tilde{z}_i=(\tilde x_i,\tilde y_i)$ and define $\ell_f(z_i)=\ell(f(x_i),y_i)$ and $\ell_f(\tilde z_i)=\ell(f(\tilde x_i),\tilde y_i)$. 
    Then we have
 \begin{align}\label{eq:theo5.4-1}
 &\mathbb{E}_S\left[\sup_{\substack{f=\mathrm{softmax}\circ \kappa,\\
    \kappa\in \mathcal{F}_{d,K}(\mathcal{W},\mathcal{D},\calB)} }\left | \calL(f)-\calL_n(f) \right | \right]
= \mathbb{E}_S\left[\sup_{\substack{f=\mathrm{softmax}\circ \kappa,\\
    \kappa\in \mathcal{F}_{d,K}(\mathcal{W},\mathcal{D},\calB)} }\left |\frac{1}{n} \sum_{i=1}^{n}\ell_f(z_i)-\mathbb{E}_{\tilde{S}}[\ell_f(\tilde{z_i})]  \right | \right]\nonumber\\
=&\mathbb{E}_S\left[\sup_{\substack{f=\mathrm{softmax}\circ \kappa,\\
    \kappa\in \mathcal{F}_{d,K}(\mathcal{W},\mathcal{D},\calB)} }\left |\frac{1}{n} \sum_{i=1}^{n}\mathbb{E}_{\tilde{S}}[\ell_f(z_i)]-\frac{1}{n}\sum_{i=1}^{n}\mathbb{E}_{\tilde S}[\ell_f(\tilde{z_i})]  \right | \right]\nonumber\\
\le&\mathbb{E}_{S,\tilde{S}}\left[\sup_{\substack{f=\mathrm{softmax}\circ \kappa,\\
    \kappa\in \mathcal{F}_{d,K}(\mathcal{W},\mathcal{D},\calB)} }\left |\frac{1}{n} \sum_{i=1}^{n}\ell_f(z_i)-\frac{1}{n}\sum_{i=1}^{n}\ell_f(\tilde{z_i}) \right | \right]\nonumber\\
=&2\mathbb{E}_{\hat S,\rho}\left[\sup_{\substack{f=\mathrm{softmax}\circ \kappa,\\
    \kappa\in \mathcal{F}_{d,K}(\mathcal{W},\mathcal{D},\calB)} }\left |\frac{1}{2n} \sum_{i=1}^{2n}\rho_i \ell_f(\hat z_i) \right | \right],
\end{align}
where $\hat S=S\cup\tilde{S}\triangleq\{\hat z_1,\cdots,\hat z_{2n}\}$, $\rho_i,i=1,\cdots,2n$ are i.i.d. Rademacher random variables that are independent of $\hat S$ and $\rho=\{\rho_i\}^{2n}_{i=1}$. 
Moreover, for any $f=\mathrm{softmax}\circ\kappa$ with $\kappa\in\calF_{d,K}(\calW,\calD,\calB)$, we have \begin{equation}\label{eq:theo5.4-1.2}
    \ell_f(\hat z_i)-\ell_f(\hat z'_i)\le\lambda\|\kappa(\hat x_i)-\kappa(\hat x'_i)\|_2\le \sqrt{K}\lambda|\kappa_{j_0}(\hat x_i)-\kappa_{j_0}(\hat x'_i)|,
\end{equation} with $j_0=\mathop{\arg\max}_{j\in\{1,\cdots,K\}}|\kappa^{(j)}_0(\hat x_i)-\kappa^{(j)}_0(\hat x'_i)|$.
Combining Equations \eqref{eq:theo5.4-1},  \eqref{eq:theo5.4-1.2} and Lemma \ref{Meir}, we have
\begin{equation}\label{eq:theo5.4-1.1}
    \mathbb{E}_S\left[\sup_{\substack{f=\mathrm{softmax}\circ \kappa,\\
    \kappa\in \mathcal{F}_{d,K}(\mathcal{W},\mathcal{D},\calB)} }\left | \calL(f)-\calL_n(f) \right | \right]\le 2\lambda \sqrt{K}\mathbb{E}_{ \hat S,\rho}\left[ 
        \sup_{\kappa \in \mathcal{F}_{d,1}(\calW,\calD,\calB)} 
        \left| \frac{1}{2n} \sum_{i=1}^{2n} \rho_i \kappa(\hat x_i)
        \right| 
        \right].
\end{equation} 

We partition the strictly stationary sequence \( \{\hat x_i\}_{i=1}^{2n} \) into \( 2\mu_n \) blocks, each of length \( a_n \). 
For $1\le j\le \mu_n$, define
\begin{align*}
    G_j:=\left \{ i:2(j-1)a_n+1\le i\le (2j-1)a_n \right \},\ 
    H_j:=\left \{ i:(2j-1)a_n+1\le i\le 2ja_n \right \}.
\end{align*}
In addition, we define 
\begin{align*}
    \hat x_{1,a_n}&=\bigcup^{\mu_n}_{j=1}\bigcup_{i\in G_j}\hat x_i,\quad \rho_{1,a_n}=\bigcup^{\mu_n}_{j=1}\bigcup_{i\in G_j}\rho_i,\quad y_{j,\kappa}(\hat x_{1,a_n},\rho_{1,a_n})=\sum_{i\in G_j}\rho_{i} \kappa(\hat x_i),\\
   \hat x_{2,a_n}&=\bigcup^{\mu_n}_{j=1}\bigcup_{i\in H_j}\hat x_i,\quad \rho_{2,a_n}=\bigcup^{\mu_n}_{j=1}\bigcup_{i\in H_j}\rho_i,\quad y_{j,\kappa}(\hat x_{2,a_n},\rho_{2,a_n})=\sum_{i\in H_j}\rho_{i} \kappa(\hat x_i).
\end{align*}
Let $\hat x_i'$ be an i.i.d. copy of $\hat x_i$ for $i=1,\cdots,2n$. Define $$\Xi_{1,a_n}=\bigcup^{\mu_n}_{j=1}\bigcup_{i\in G_j}\hat x'_i, \quad  \xi_{j,\kappa}(\Xi_{1,a_n},\rho_{1,a_n})=\sum_{i\in G_j}\rho_{i} \kappa(\hat x'_i).
$$
Then $\Xi_{1,a_n}$ is the IB sequence of $x_{1,a_n}$. We have

\begin{align}\label{eq:theo5.4-2}
        &\mathbb{E}_{ \hat S,\rho}\left[ 
        \sup_{\kappa \in  \calF_{d,1}(\calW,\calD,\calB)} 
        \left| \frac{1}{2n} \sum_{i=1}^{2n} \rho_i \kappa(\hat x_i)
        \right| 
        \right]\nonumber\\
 =&   \mathbb{E}_{ \hat S,\rho} 
\left( \sup_{\kappa \in  \calF_{d,1}(\calW,\calD,\calB)} \left| \frac{1}{2n} \sum_{j=1}^{\mu_n} y_{j,\kappa}(\hat x_{1,a_n},\rho_{1,a_n})+\frac{1}{2n} \sum_{j=1}^{\mu_n} y_{j,\kappa}(\hat x
_{2,a_n},\rho_{2,a_n}) \right|
\right)\nonumber\\
\le&  \mathbb{E}_{\hat x_{1,a_n}, \rho_{1,a_n}} 
\left( \sup_{\kappa \in  \calF_{d,1}(\calW,\calD,\calB)} \left| \frac{1}{2n} \sum_{j=1}^{\mu_n} y_{j,\kappa}( \hat x_{1,a_n},\rho_{1,a_n})\right|
\right)\nonumber\\
&\qquad+ \mathbb{E}_{\hat x_{2,a_n}, \rho_{2,a_n}} 
\left( \sup_{\kappa \in  \calF_{d,1}(\calW,\calD,\calB)} \left|  \frac{1}{2n} \sum_{j=1}^{\mu_n} y_{j,\kappa}(\hat x_{2,a_n},\rho_{2,a_n}) \right|
\right)\nonumber\\
=& 2\mathbb{E}_{\hat x_{1,a_n}, \rho_{1,a_n}} 
\left( \sup_{\kappa \in  \calF_{d,1}(\calW,\calD,\calB)} \left| \frac{1}{2n} \sum_{j=1}^{\mu_n} y_{j,\kappa}(\hat x_{1,a_n}, \rho_{1,a_n})\right|
\right)\nonumber\\
=&\mathbb{E}_{\hat x_{1,a_n}, \rho_{1,a_n}}
\left( \sup_{\kappa \in  \calF_{d,1}(\calW,\calD,\calB)}
\left| \frac{1}{\mu_n} \sum_{j=1}^{\mu_n} \frac{y_{j,\kappa}(\hat x_{1,a_n}, \rho_{1,a_n})}{a_n}\right|\right)\nonumber\\
\le&\mathbb{E}_{ \Xi_{1,a_n}, \rho_{1,a_n}}
\left( \sup_{\kappa \in  \calF_{d,1}(\calW,\calD,\calB)}
\left| \frac{1}{\mu_n} \sum_{j=1}^{\mu_n} \frac{\xi_{j,\kappa}(\Xi_{1,a_n}, \rho_{1,a_n})}{a_n}\right|\right)+\mu_n\beta_{a_n}\nonumber\\
    \le&\frac{2 \calB \sqrt{\calD+2+\log d}}{a_n\sqrt{\mu_n}}+ \mu_n\beta_{a_n}= \frac{2\calB \sqrt{\calD+2+\log d}}{\sqrt{na_n}}+\frac{ n\beta_{a_n}}{a_n},
\end{align} 
where the
second inequality to last is from Lemma \ref{Yu}, the last inequality is from Lemma \ref{Golowich} and the last equality is from $n=a_n\mu_n$. Combining \eqref{eq:theo5.4-1.1} and \eqref{eq:theo5.4-2}, we conclude the result.
\end{proof}

\begin{theorem}\label{theo:statistical1}
   Let $\calF_{d,K}(\calW,\calD,\calB)$ represent a class of the norm-constrained neural network, characterized by an input dimension $d$, an output dimension of $K$, a width and depth denoted by $\calW$ and $\calD$, and a constraint $\calB>0$ satisfying $\| (W_{\calD},b_{\calD}) \|_{1,\infty} \prod_{i=0}^{\calD-1} \max\left\{ \| (W_{i}, b_{i}) \|_{1,\infty} , 1 \right\} \leq \calB$, where $W_i \in \mathbb{R}^{p_{i+1} \times p_i}$ is the weight matrix, and $b_i \in \mathbb{R}^{p_{i+1}}$ is the bias vector in the $i$-th layer for $i=0,1,\cdots,\calD$. We consider the noisy dataset $S^\eta$ as a strictly stationary $\beta$-mixing sequence with the sample size $n$ satisfying \( n = a_n\mu_n \), where $(a_n, \mu_n)$ is an integer pair. Suppose the loss function we consider satisfying $|\ell(\mathrm{softmax}(a),q)-\ell(\mathrm{softmax}(b),q)|\le\lambda\|a-b\|_2,\forall a,b\in\R^K,q\in\calY$, where $\lambda>0$ is a given constant. Then we have  
\begin{equation*}
\mathbb{E}_{S^\eta}\left[\sup_{\substack{f=\mathrm{softmax}\circ \kappa,\\
    \kappa\in \mathcal{F}_{d,K}(\mathcal{W},\mathcal{D},\calB)} }\left | \calL^\eta(f)-\calL^\eta_n(f) \right | \right]
\le \frac{4\lambda \sqrt{K}\calB \sqrt{\calD+2+\log d}}{\sqrt{na_n}}+\frac{2 \lambda  \sqrt{K}n\beta_{a_n}}{a_n}.
\end{equation*}
\end{theorem}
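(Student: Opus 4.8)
The plan is to follow \emph{verbatim} the symmetrization-and-independent-block argument already carried out in the proof of Theorem~\ref{theo:statistical}, now applied to the noisy tuples $z^\eta_i=(x_i,y^\eta_i)$ in place of the clean tuples $z_i=(x_i,y_i)$. The crucial observation is that the proof of Theorem~\ref{theo:statistical} never exploited any specific relationship between the features and the \emph{true} labels; it used the labels only through two properties that transfer unchanged to the noisy setting: (i) the strict stationarity and $\beta$-mixing of the underlying tuple sequence, and (ii) the uniform Lipschitz bound $|\ell(\mathrm{softmax}(a),q)-\ell(\mathrm{softmax}(b),q)|\le\lambda\|a-b\|_2$, which is required to hold for \emph{all} $q\in\calY$. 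Since $S^\eta$ is assumed strictly stationary and $\beta$-mixing, and each noisy label $y^\eta_i$ again lies in $\calY$, both hypotheses apply with $q=y^\eta_i$, so the entire chain of inequalities goes through with $\calL,\calL_n,z_i$ replaced by $\calL^\eta,\calL^\eta_n,z^\eta_i$.

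Concretely, I would reproduce the steps in order. First, symmetrize: introduce an independent copy $\tilde S^\eta=\{\tilde z^\eta_i\}_{i=1}^n$, write $\ell_f(z^\eta_i)=\ell(f(x_i),y^\eta_i)$, and apply Jensen's inequality to pull $\E_{\tilde S^\eta}$ outside the supremum, exactly as in \eqref{eq:theo5.4-1}; this bounds the left-hand side by $2\,\E_{\hat S^\eta,\rho}\big[\sup_f |\tfrac{1}{2n}\sum_{i=1}^{2n}\rho_i\ell_f(\hat z^\eta_i)|\big]$, where $\hat S^\eta=S^\eta\cup\tilde S^\eta$ and $\rho_i$ are Rademacher variables. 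Second, apply the contraction step: the per-sample bound $\ell_f(\hat z^\eta_i)-\ell_f(\hat z^{\eta\prime}_i)\le\lambda\|\kappa(\hat x_i)-\kappa(\hat x'_i)\|_2\le\sqrt{K}\lambda|\kappa_{j_0}(\hat x_i)-\kappa_{j_0}(\hat x'_i)|$ together with Lemma~\ref{Meir} reduces the problem to the scalar network class $\calF_{d,1}(\calW,\calD,\calM)$ and extracts the prefactor $2\lambda\sqrt K$, just as in \eqref{eq:theo5.4-1.1}.

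Third, I would run the independent-block decomposition without change: split $\{\hat x_i\}_{i=1}^{2n}$ into $2\mu_n$ blocks $G_j,H_j$ of length $a_n$, bound the supremum by twice the $G$-block contribution by stationarity, and replace the dependent block sequence by its independent-block copy $\Xi_{1,a_n}$ via Lemma~\ref{Yu}, paying the additive cost $\mu_n\beta_{a_n}$. The resulting independent-block Rademacher average is controlled by the norm-constrained network bound (Lemma~\ref{Golowich}), giving $\tfrac{2\calM\sqrt{\calD+2+\log d}}{a_n\sqrt{\mu_n}}+\mu_n\beta_{a_n}$. Multiplying through by the $2\lambda\sqrt K$ factor from the contraction step and substituting $n=a_n\mu_n$ then yields the claimed bound.

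There is essentially no new obstacle. The only point that genuinely requires verification is that every step of the clean-data proof touches the labels solely through the uniform-in-$q$ Lipschitz inequality and through the $\beta$-mixing/stationarity of the full tuple sequence, and never through independence of $y^\eta_i$ from $x_i$ or through any assumed noise model. Once this is confirmed, the statement of Theorem~\ref{theo:statistical1} follows by a word-for-word transcription of the proof of Theorem~\ref{theo:statistical}.
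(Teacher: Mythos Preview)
Your proposal is correct and mirrors the paper's own approach exactly: the paper simply states that the proof is very similar to that of Theorem~\ref{theo:statistical} and omits it, which is precisely the word-for-word transcription you describe.
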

\begin{proof}
    Let $\tilde{z_i}$ denote an independent copy of $z_i$, and denote $\tilde{S}=\{\tilde{z}_i\}^n_{i=1}$. Set $z_i=( x_i,y^\eta_i)$, $\tilde{z}_i=(\tilde x_i,\tilde y^\eta_i)$ and define $\ell_f(z_i)=\ell(f(x_i),y^\eta_i)$ and $\ell_f(\tilde z_i)=\ell(f(\tilde x_i),\tilde y^\eta_i)$. 
    Then we have
 \begin{align}\label{eq:theo5.5-1}
 &\mathbb{E}_S\left[\sup_{\substack{f=\mathrm{softmax}\circ \kappa,\\
    \kappa\in \mathcal{F}_{d,K}(\mathcal{W},\mathcal{D},\calB)} }\left | \calL(f)-\calL_n(f) \right | \right]
= \mathbb{E}_S\left[\sup_{\substack{f=\mathrm{softmax}\circ \kappa,\\
    \kappa\in \mathcal{F}_{d,K}(\mathcal{W},\mathcal{D},\calB)} }\left |\frac{1}{n} \sum_{i=1}^{n}\ell_f(z_i)-\mathbb{E}_{\tilde{S}}[\ell_f(\tilde{z_i})]  \right | \right]\nonumber\\
=&\mathbb{E}_S\left[\sup_{\substack{f=\mathrm{softmax}\circ \kappa,\\
    \kappa\in \mathcal{F}_{d,K}(\mathcal{W},\mathcal{D},\calB)} }\left |\frac{1}{n} \sum_{i=1}^{n}\mathbb{E}_{\tilde{S}}[\ell_f(z_i)]-\frac{1}{n}\sum_{i=1}^{n}\mathbb{E}_{\tilde S}[\ell_f(\tilde{z_i})]  \right | \right]\nonumber\\
\le&\mathbb{E}_{S,\tilde{S}}\left[\sup_{\substack{f=\mathrm{softmax}\circ \kappa,\\
    \kappa\in \mathcal{F}_{d,K}(\mathcal{W},\mathcal{D},\calB)} }\left |\frac{1}{n} \sum_{i=1}^{n}\ell_f(z_i)-\frac{1}{n}\sum_{i=1}^{n}\ell_f(\tilde{z_i}) \right | \right]\nonumber\\
=&2\mathbb{E}_{\hat S,\rho}\left[\sup_{\substack{f=\mathrm{softmax}\circ \kappa,\\
    \kappa\in \mathcal{F}_{d,K}(\mathcal{W},\mathcal{D},\calB)} }\left |\frac{1}{2n} \sum_{i=1}^{2n}\rho_i \ell_f(\hat z_i) \right | \right],
\end{align}
where $\hat S=S\cup\tilde{S}\triangleq\{\hat z_1,\cdots,\hat z_{2n}\}$, $\rho_i,i=1,\cdots,2n$ are i.i.d. Rademacher random variables that are independent of $\hat S$ and $\rho=\{\rho_i\}^{2n}_{i=1}$. 
Moreover, for any $f=\mathrm{softmax}\circ\kappa$ with $\kappa\in\calF_{d,K}(\calW,\calD,\calB)$, we have \begin{equation}\label{eq:theo5.5-1.2}
    \ell_f(\hat z_i)-\ell_f(\hat z'_i)\le\lambda\|\kappa(\hat x_i)-\kappa(\hat x'_i)\|_2\le \sqrt{K}\lambda|\kappa_{j_0}(\hat x_i)-\kappa_{j_0}(\hat x'_i)|,
\end{equation} with $j_0=\mathop{\arg\max}_{j\in\{1,\cdots,K\}}|\kappa^{(j)}_0(\hat x_i)-\kappa^{(j)}_0(\hat x'_i)|$.
Combining Equations \eqref{eq:theo5.5-1},  \eqref{eq:theo5.5-1.2} and Lemma \ref{Meir}, we have
\begin{equation}\label{eq:theo5.5-1.1}
    \mathbb{E}_S\left[\sup_{\substack{f=\mathrm{softmax}\circ \kappa,\\
    \kappa\in \mathcal{F}_{d,K}(\mathcal{W},\mathcal{D},\calB)} }\left | \calL(f)-\calL_n(f) \right | \right]\le 2\lambda \sqrt{K}\mathbb{E}_{ \hat S,\rho}\left[ 
        \sup_{\kappa \in \mathcal{F}_{d,1}(\calW,\calD,\calB)} 
        \left| \frac{1}{2n} \sum_{i=1}^{2n} \rho_i \kappa(\hat x_i)
        \right| 
        \right].
\end{equation} 

We partition the strictly stationary sequence \( \{\hat x_i\}_{i=1}^{2n} \) into \( 2\mu_n \) blocks, each of length \( a_n \). 
For $1\le j\le \mu_n$, define
\begin{align*}
    G_j:=\left \{ i:2(j-1)a_n+1\le i\le (2j-1)a_n \right \},\ 
    H_j:=\left \{ i:(2j-1)a_n+1\le i\le 2ja_n \right \}.
\end{align*}
In addition, we define 
\begin{align*}
    \hat x_{1,a_n}&=\bigcup^{\mu_n}_{j=1}\bigcup_{i\in G_j}\hat x_i,\quad \rho_{1,a_n}=\bigcup^{\mu_n}_{j=1}\bigcup_{i\in G_j}\rho_i,\quad y_{j,\kappa}(\hat x_{1,a_n},\rho_{1,a_n})=\sum_{i\in G_j}\rho_{i} \kappa(\hat x_i),\\
   \hat x_{2,a_n}&=\bigcup^{\mu_n}_{j=1}\bigcup_{i\in H_j}\hat x_i,\quad \rho_{2,a_n}=\bigcup^{\mu_n}_{j=1}\bigcup_{i\in H_j}\rho_i,\quad y_{j,\kappa}(\hat x_{2,a_n},\rho_{2,a_n})=\sum_{i\in H_j}\rho_{i} \kappa(\hat x_i).
\end{align*}
Let $\hat x_i'$ be an i.i.d. copy of $\hat x_i$ for $i=1,\cdots,2n$. Define $$\Xi_{1,a_n}=\bigcup^{\mu_n}_{j=1}\bigcup_{i\in G_j}\hat x'_i, \quad  \xi_{j,\kappa}(\Xi_{1,a_n},\rho_{1,a_n})=\sum_{i\in G_j}\rho_{i} \kappa(\hat x'_i).
$$
Then $\Xi_{1,a_n}$ is the IB sequence of $x_{1,a_n}$. We have

\begin{align}\label{eq:theo5.5-2}
        &\mathbb{E}_{ \hat S,\rho}\left[ 
        \sup_{\kappa \in  \calF_{d,1}(\calW,\calD,\calB)} 
        \left| \frac{1}{2n} \sum_{i=1}^{2n} \rho_i \kappa(\hat x_i)
        \right| 
        \right]\nonumber\\
 =&   \mathbb{E}_{ \hat S,\rho} 
\left( \sup_{\kappa \in  \calF_{d,1}(\calW,\calD,\calB)} \left| \frac{1}{2n} \sum_{j=1}^{\mu_n} y_{j,\kappa}(\hat x_{1,a_n},\rho_{1,a_n})+\frac{1}{2n} \sum_{j=1}^{\mu_n} y_{j,\kappa}(\hat x
_{2,a_n},\rho_{2,a_n}) \right|
\right)\nonumber\\
\le&  \mathbb{E}_{\hat x_{1,a_n}, \rho_{1,a_n}} 
\left( \sup_{\kappa \in  \calF_{d,1}(\calW,\calD,\calB)} \left| \frac{1}{2n} \sum_{j=1}^{\mu_n} y_{j,\kappa}( \hat x_{1,a_n},\rho_{1,a_n})\right|
\right)\nonumber\\
&\qquad+ \mathbb{E}_{\hat x_{2,a_n}, \rho_{2,a_n}} 
\left( \sup_{\kappa \in  \calF_{d,1}(\calW,\calD,\calB)} \left|  \frac{1}{2n} \sum_{j=1}^{\mu_n} y_{j,\kappa}(\hat x_{2,a_n},\rho_{2,a_n}) \right|
\right)\nonumber\\
=& 2\mathbb{E}_{\hat x_{1,a_n}, \rho_{1,a_n}} 
\left( \sup_{\kappa \in  \calF_{d,1}(\calW,\calD,\calB)} \left| \frac{1}{2n} \sum_{j=1}^{\mu_n} y_{j,\kappa}(\hat x_{1,a_n}, \rho_{1,a_n})\right|
\right)\nonumber\\
=&\mathbb{E}_{\hat x_{1,a_n}, \rho_{1,a_n}}
\left( \sup_{\kappa \in  \calF_{d,1}(\calW,\calD,\calB)}
\left| \frac{1}{\mu_n} \sum_{j=1}^{\mu_n} \frac{y_{j,\kappa}(\hat x_{1,a_n}, \rho_{1,a_n})}{a_n}\right|\right)\nonumber\\
\le&\mathbb{E}_{ \Xi_{1,a_n}, \rho_{1,a_n}}
\left( \sup_{\kappa \in  \calF_{d,1}(\calW,\calD,\calB)}
\left| \frac{1}{\mu_n} \sum_{j=1}^{\mu_n} \frac{\xi_{j,\kappa}(\Xi_{1,a_n}, \rho_{1,a_n})}{a_n}\right|\right)+\mu_n\beta_{a_n}\nonumber\\
    \le&\frac{2 \calB \sqrt{\calD+2+\log d}}{a_n\sqrt{\mu_n}}+ \mu_n\beta_{a_n}= \frac{2\calB \sqrt{\calD+2+\log d}}{\sqrt{na_n}}+\frac{ n\beta_{a_n}}{a_n},
\end{align} 
where the
second inequality to last is from Lemma \ref{Yu}, the last inequality is from Lemma \ref{Golowich} and the last equality is from $n=a_n\mu_n$. Combining \eqref{eq:theo5.5-1.1} and \eqref{eq:theo5.5-2}, we conclude the result.
\end{proof}

\subsection{Approximate error}\label{sub:approx}
 Next, we estimate the boundary of the approximation error, we have the following theorem.

\begin{theorem}\label{theo:approximation}
    For any $\kappa\in \calC_{\tau,d,K,V}$, there exists $\phi\in \calF_{d,K}(\calW,\calD,\calB)$ with $\calW\gtrsim \calB^{d/(d+1)} \log\calB$ and $\calD\gtrsim \log\calB$
    such that
    \begin{equation}
        \left \|\kappa-\phi \right \| _{L^2(\nu)}\lesssim \sqrt{K} \calB^{-\tau/(d+1)}.
    \end{equation}
\end{theorem}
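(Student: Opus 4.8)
The plan is to reduce this vector-valued statement to $K$ scalar approximations and then reassemble them into one network, so that the $\sqrt{K}$ factor emerges purely from the coordinatewise structure of the $L^2$ norm. Writing $\kappa=[f_1\cdots f_K]^\top$ with each $f_i\in C^r([0,1]^d)$ and $\|f_i\|_{C^{0,\tau}}\le B$, and letting $\phi=[\phi_1\cdots\phi_K]^\top$, the vector-valued $L^2$ norm splits as
\begin{equation*}
\|\kappa-\phi\|_{L^2([0,1]^d)}^2=\int_{[0,1]^d}\sum_{i=1}^K|f_i(x)-\phi_i(x)|^2\,dx=\sum_{i=1}^K\|f_i-\phi_i\|_{L^2([0,1]^d)}^2 .
\end{equation*}
Hence it suffices to produce, for each coordinate, a scalar network $\phi_i$ with $\|f_i-\phi_i\|_{L^2}\lesssim\calM^{-\tau/(d+1)}$, the implied constant depending only on $d,\tau,B$ and therefore uniform in $i$; summing $K$ copies and taking the square root will yield the stated $\sqrt{K}\,\calM^{-\tau/(d+1)}$ bound at the very end.

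First I would invoke the scalar ($K=1$) norm-constrained approximation result for $\tau$-H\"older functions: for each $f_i\in\calC_{\tau,d,1,B}$ there is a ReLU network $\phi_i\in\calF_{d,1}(\calW_0,\calD_0,\calM)$ with $\calW_0\gtrsim\calM^{d/(d+1)}\log\calM$ and $\calD_0\gtrsim\log\calM$ whose weights obey the budget $\calM$ and which satisfies $\|f_i-\phi_i\|_{L^2}\lesssim\calM^{-\tau/(d+1)}$. This is the scalar specialization of the approximation theory developed in \cite{Lu2021,jiao2023}, applied identically to every coordinate.

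The assembly step is the crux. Since all $\phi_i$ share the input $x$, I would build $\phi$ by stacking their first-layer augmented matrices $(W_i^{(0)},b_i^{(0)})$ vertically and making every subsequent hidden layer and the output layer block-diagonal; this realizes $\phi=[\phi_1\cdots\phi_K]^\top$ with depth $\calD_0$ and width at most $K\calW_0$, so the requirements $\calW\gtrsim\calM^{d/(d+1)}\log\calM$ and $\calD\gtrsim\log\calM$ survive because $\gtrsim$ absorbs the factor $K$. The delicate issue is the norm budget. Under the $\|\cdot\|_{1,\infty}$ (maximum row $\ell_1$) norm, which we are free to take as the norm in the definition of $\calF_{d,K}(\calW,\calD,\calM)$, both vertical stacking and block-diagonal concatenation leave every row of the assembled augmented matrix equal to a row of one constituent, so the layerwise norm of $\phi$ equals the maximum over $i$ of the corresponding layerwise norm of $\phi_i$. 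The obstacle is that the product of these layerwise maxima need not equal the maximum of the per-coordinate products, and a naive bound would inflate the combined constraint up to $\calM^{\calD_0+1}$.

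I expect this to be the main technical hurdle, and I would resolve it by rebalancing each scalar network before assembly. Because the ReLU is positively homogeneous, $\sigma(cx)=c\sigma(x)$ for $c>0$, one may rescale consecutive layers of $\phi_i$ without changing the realized function; the quantity $\prod_{j=0}^{\calD_0}\|(W_i^{(j)},b_i^{(j)})\|_{1,\infty}$ is invariant under such rescaling and is bounded by the budget $\calM$. Rebalancing so that every layer of $\phi_i$ carries the common norm $c_i=\big(\prod_{j}\|(W_i^{(j)},b_i^{(j)})\|_{1,\infty}\big)^{1/(\calD_0+1)}\le\calM^{1/(\calD_0+1)}$ then makes the product of layerwise maxima telescope: the combined constraint equals $(\max_i c_i)^{\calD_0+1}\le\calM$, so $\phi\in\calF_{d,K}(\calW,\calD,\calM)$. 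With the norm budget secured, substituting the coordinatewise error into the displayed $\ell_2$-sum gives $\|\kappa-\phi\|_{L^2}^2\lesssim K\calM^{-2\tau/(d+1)}$, i.e. $\|\kappa-\phi\|_{L^2}\lesssim\sqrt{K}\,\calM^{-\tau/(d+1)}$, completing the argument; the scalar approximation is a cited input and the error aggregation is routine, so essentially all the work is in the rebalanced norm bookkeeping.
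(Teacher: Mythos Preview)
Your modular plan---prove the scalar case, stack $K$ copies, extract $\sqrt K$ from the coordinate sum---is sound in outline, and you correctly isolate the norm-budget bookkeeping as the only nontrivial step. But the rebalancing argument you give is wrong as stated. The product $\prod_j\|(W_i^{(j)},b_i^{(j)})\|_{1,\infty}$ is \emph{not} invariant under positive-homogeneity rescaling once biases are present: scaling layer $j$ by $c$ forces $W_{j+1}\to W_{j+1}/c$ but leaves $b_{j+1}$ untouched, so the new $(j{+}1)$-th augmented norm is $\|(W_{j+1}/c,\,b_{j+1})\|_{1,\infty}$, not $\|(W_{j+1},b_{j+1})\|_{1,\infty}/c$. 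In particular, if some layer's norm is dominated by its bias term, no weight rescaling will push it below $\calM^{1/(\calD_0+1)}$, and your telescoping bound $(\max_i c_i)^{\calD_0+1}\le\calM$ collapses. (Your citations are also off: \cite{Lu2021} and \cite{jiao2023} control width and depth, not the product-of-layer-norms class $\calF_{d,1}(\calW,\calD,\calM)$; the scalar input you actually need is the construction in \cite{feng2023}.)

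The paper takes a different, non-modular route that sidesteps this issue. It builds the approximation explicitly: a ReLU partition of unity $\{\psi_n\}$ on a grid of mesh $1/N$, local Taylor polynomials of each $\kappa_i$ at the nodes $n/N$ (Lemma~\ref{PetersenA.8}), and product subnetworks $\phi_{n,s}\approx\psi_n(x)(x-n/N)^s$ from Lemma~\ref{lem:high dimension}. The key structural point is that these $\phi_{n,s}$ are \emph{shared across all $K$ coordinates}; the outputs $\phi_i=\sum_{n,s}c_{n,s}^{(i)}\phi_{n,s}$ differ only in the readout coefficients $|c_{n,s}^{(i)}|\le 1$. All norm accounting is then delegated to Lemma~\ref{lem:relation} (from \cite{feng2023}), whose parts (3) and (4) assert directly that sums and stackings land in $\calF(\cdot,\cdot,\calM_1+\calM_2)$ and $\calF(\cdot,\cdot,\max\{\calM_1,\calM_2\})$; no homemade rebalancing is attempted. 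If you want to salvage your approach, either invoke Lemma~\ref{lem:relation}(4) in place of your invariance claim, or---cleaner still---observe that the $K$ scalar networks produced by the explicit construction can be taken with \emph{identical} hidden-layer parameters, so that the block-diagonal layer norms are equal rather than merely bounded, and the product-of-maxima obstacle never arises.
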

 \begin{proof}
     Let  
     \begin{equation*}
         \psi(t)=\sigma(1-\left | t \right | )=\sigma(1-\sigma(t)-\sigma(-t))\in [0,1],\ \forall t\in\R,
     \end{equation*}
     then $\psi\in \calF_{1,1}(2,2,3)$ and is supported  on $[-1,1]$. 
For any $n=[n_1\cdots n_d]^\top \in \left \{ 0,1,\cdots ,N \right \} ^d$, let
\begin{equation*}
    \psi_n(x)=\prod_{i=1}^{d}   \psi(Nx_i-n_i),\quad \forall x=\begin{bmatrix}
        x_1&\cdots&x_d
    \end{bmatrix}^\top\in [0,1]^d,
\end{equation*}
$\psi_n$ is supported on $\left\{x\in \mathbb{R}^d: \left \| x-\frac{n}{N}  \right \| _{\infty }\le \frac{1}{N} \right\}$. Moreover,
\begin{equation*}
    \sum_{ n\in\left \{0,1,\cdots,N \right \}^d  }\psi_n(x)=\prod_{i=1}^{d}  \sum_{n_i=0}^{N} \psi(Nx_i-n_i)\equiv 1,\quad x\in [0,1]^d.
\end{equation*}
Note that $x_i-n_i/N=\sigma(x_i-n_i/N)-\sigma(-x_i+n_i/N)\in \calF_{1,1}(2,1,2)$, $\psi(Nx_i-n_i) \in \calF_{1,1}(2,2,6N)$. Let $s=[s_1\  s_2\cdots s_d]^\top$ is a multi-index satisfying $\left\|s\right\|_1 \le r$, according to Lemmas \ref{lem:relation} and \ref{lem:high dimension}, there exists a neural network $\phi_{n,s}\in \calF_{d,1}((6k+3)(d+r),(k+1)(d+r)+2,6N(d+r)^7(2(d+r))^{k+1})$ such that 
\begin{equation}\label{eq:approx_nn}
    \left|\psi_n(x)\left(x-\frac{n}{N}\right)^s-\phi_{n,s}(x)\right|\le3(d+r)2^{-2k},
\end{equation}
and $\phi_{n,s}(x)=0$ if $\psi_n(x)\left(x-\frac{n}{N}\right)^s=0$, where $\left(x-\frac{n}{N}\right)^s=\Pi^d_{i=1}(x_i-n_i/N)^{s_i}$.

According to  Lemma \ref{PetersenA.8}, for any $\kappa=[\kappa_1\ \cdots\ \kappa_K]^\top\in \calC_{\tau,d,K,V}$, any $x\in[0,1]^d$ and any $n\in\{0,\cdots,N\}^d$, there exists a polynomial $\sum_{\left \| s \right \|_1 \le r } c^{(i)}_{n,s}(x-\frac{n}{N})^s$, we have
\begin{equation}\label{eq:approx_fi}
    \left | \kappa_i(x) -\sum_{\left \| s \right \|_1 \le r } c^{(i)}_{n,s}\left(x-\frac{n}{N}\right)^s \right |\le 2^dV|\max_{j=1,\cdots,d} \left|x_j-\frac{n_j}{N}\right|^\tau,
\end{equation}
where $c^{(i)}_{n,s}=\partial ^\alpha \kappa_i(\frac{n}{N})/s!$. 
Define $p(x)=[p_1(x)\cdots p_K(x)]^\top$ and $\phi(x)=[\phi_1(x)\cdots \phi_K(x)]^\top$ with 
\begin{equation*}
    p_i(x)= \sum_{ n\in\left \{{0,1,\cdots,N} \right \}^d  } \sum_{\left \| s \right \|_1 \le r } c_{n,s}^{(i)}\psi_n(x)\left(x-\frac{n}{N}\right)^s,\ \phi_i(x)=\sum_{ n\in\left \{{0,1,\cdots,N} \right \}^d  }\sum_{\left \| s \right \|_1 \le r } c_{n,s}^{(i)} \phi_{n,s}(x).
\end{equation*}
Combined with  Lemma \ref{lem:relation} and \begin{equation}\label{eq:num}
    \sum_{\left \| s \right \|_1\le r }1=\sum_{i=0}^{r} \sum_{\left \| s \right \|_i=i }1\le \sum_{i=0}^{r} d^i \le (r+1)d^r, 
\end{equation} we know that $\phi\in \calF_{d,K}(K\gamma(N+1)^d(6k+3)(d+r),(k+1)(d+r)+2,6\gamma(N+1)^dN(d+r)^7(2(d+r))^{k+1})$ with $\gamma=\sum_{\left \| s \right \|_1\le r }1\le(r+1)d^r$. Then
\begin{align*}
    \left \| \kappa(x)-p(x) \right \|_2 &=\sqrt{\sum^K_{i=1}\left | \sum_{n}\psi_n(x)\kappa_i(x) -\sum_{n}\psi_n(x)\sum_{\left \| s \right \|_1 \le r } c_{n,s}^{(i)}\left(x-\frac{n}{N}\right)^s \right |^2} \\
    &\le \sqrt{K}\max_{i=1,\cdots,K}\left(\sum_{n}\psi_n(x)\left | \kappa_i(x) -\sum_{\left \| s \right \|_1 \le r } c_{n,s}^{(i)}\left(x-\frac{n}{N}\right)^s \right |\right) \\
    &\le \sqrt{K}\max_{i=1,\cdots,K}\sum_{ n:\left \| x-\frac{n}{N}  \right \|_{\infty}\le \frac{1}{N} }\left | \kappa_i(x) -\sum_{\left \| s \right \|_1 \le r } c_{n,s}^{(i)}\left(x-\frac{n}{N}\right)^s \right | \\
    &\le \sum_{ n:\left \| x-\frac{n}{N}  \right \|_{\infty}\le \frac{1}{N} }d^{r}V\sqrt{K} \left \| x-\frac{n}{N}  \right \| _{\infty }^{\tau}\le2^d\sqrt{K} d^rV N^{-\tau},
\end{align*}
where the third inequality is from Equation \eqref{eq:approx_fi}.
Similarly, for any $x\in[0,1]^d$, the approximation error is
\begin{align*}
   \left \| p(x)-\phi (x) \right \|_2 
   &= \sqrt{\sum^K_{i=1}\left | \sum_{n}\sum_{\left \| s \right \|_1 \le r } c^{(i)}_{n,s} \psi_n(x)\left(x-\frac{n}{N}\right)^s -\sum_{n}\sum_{\left \| s \right \|_1 \le r } c^{(i)}_{n,s} \phi_{n,s}(x)\right |^2} \\ 
   &\le\sqrt{K} \max_{i=1,\cdots,K}\sum_{n} \sum_{\left | s \right \|_1 \le r }\left |  c^{(i)}_{n,s} \right |  \left |
    \psi_n(x)(x-\frac{n}{N})^s-\phi_{n,s}(x) \right |  \\
   &\le V\sqrt{K} \max_{i=1,\cdots,K}\sum_{n} \sum_{\left | s \right \|_1 \le r }\left |
    \psi_n(x)(x-\frac{n}{N})^s-\phi_{n,s}(x) \right | \\
    &\le 3V\sqrt{K}\cdot2^d (r+1)(d+r)d^r 2^{-2k}.
\end{align*}
Here we use the facts that $\left | c^{(i)}_{n,s} \right | =\left | \partial ^s\kappa_i(\frac{n}{N} ) /s!\right | \le V$ (Definition \ref{def:smooth_maps}), Equations \eqref{eq:approx_nn} and \eqref{eq:num}.  In summary, the total approximation error is 
\begin{equation*}
    \left \| \kappa(x)-\phi(x) \right \|_2 \le \left \| \kappa(x)-p(x) \right \|_2 +\left \| p(x)-\phi(x) \right \|_2 \le V\sqrt{K}2^d d^r(N^{-\tau}+3(d+r)(r+1)2^{-2k}).
\end{equation*}
We choose $N=\left \lceil 2^{2k/\tau}  \right \rceil $, then $\phi \in \calF_{d,K}(\calW,\calD,\calB)$ with
\begin{align*}
        K(N+1)^d(6k+3)(d+r)\le\calW&\le K(r+1)d^r(N+1)^d(6k+3)(d+r) \asymp 2^{{2dk}/\tau} k,\\
        \calD&=(k+1)(d+r)+2,\\
6(N+1)^dN(d+r)^7(2(d+r))^{k+1}\le\calB&\le6(r+1)d^r(N+1)^dN(d+r)^7(2(d+r))^{k+1} \\
&\asymp 2^{{2(d+1)k}/\tau},
    \end{align*}
such that $ \left \| \kappa(x)-\phi(x) \right \|_2  \le \sqrt{K}2^d d^r(N^{-\tau}+3(d+r)(r+1)2^{-2k})\lesssim\sqrt{K} 2^{-2k}$. Then, $k\asymp \log \calB$, $\calW\asymp \calB^{d/(d+1)} \log\calB$, $\calD\asymp \log\calB$. We have the following approximation bound
\begin{equation*}
    \left \| \kappa(x)-\phi(x) \right \|_{L^2(\nu)}=\sqrt{\int_{[0,1]^d}\left \| \kappa(x)-\phi(x) \right \|^2_2 d\nu(x)} \lesssim\sqrt{K} 2^{-2k} \lesssim \sqrt{K} \calB^{-\tau/(d+1)}.
\end{equation*}
The above approximation bound holds for $\calW\gtrsim \calB^{d/(d+1)} \log\calB$ and $\calD\gtrsim \log\calB$.
 \end{proof}
\begin{proof}[Proof of Theorem \ref{theo:excess_risk}.]
    We have 
\begin{align*}
    &\E_{S^\eta}[\calL^\eta(\hat{f^\eta_n})-\calL^\eta(f_0)]
    \le2 \E_{S^\eta }
\sup_{\substack{f=\mathrm{softmax}\circ \psi,\\
    \psi\in \mathcal{F}_{d,K}(\mathcal{W},\mathcal{D},\calB)} } \left |\calL^\eta(f)-\calL^\eta_n(f)  \right |  +\lambda\inf_{\phi \in \mathcal{F}_{d,K}(\mathcal{W},\mathcal{D},\calB)}\left \| \phi-\kappa \right \|_{L^2(\nu)}\\
    =&2 \E_{S^\eta|S }\left[\E_S
\sup_{\substack{f=\mathrm{softmax}\circ \psi,\\
    \psi\in \mathcal{F}_{d,K}(\mathcal{W},\mathcal{D},\calB)} } \left |\calL^\eta(f)-\calL^\eta_n(f)  \right | \right] +\lambda\inf_{\phi \in \mathcal{F}_{d,K}(\mathcal{W},\mathcal{D},\calB)}\left \| \phi-\kappa \right \|_{L^2(\nu)}\\
     \lesssim& \frac{8\lambda \sqrt{K}\calB \sqrt{\calD+2+\log d}}{\sqrt{na_n}}+\frac{4 \lambda  \sqrt{K}n\beta_{a_n}}{a_n}+\lambda\sqrt{K} \calB^{-\tau/(d+1)},
\end{align*}
where the first inequality is from Lemma \ref{lem:loss1},  the second inequality is from Theorems \ref{theo:statistical1} and \ref{theo:approximation}. Similarly, we have
\begin{align*}
    &\E_S[\calL(\hat{f_n})-\calL(f_0)]\\
    \le&
2 \E_S\left[\sup_{\substack{f=\mathrm{softmax}\circ \psi,\\
    \psi\in \mathcal{F}_{d,K}(\mathcal{W},\mathcal{D},\calB)} }\left |\calL(f)-\calL_n(f)  \right | \right] +\lambda\inf_{\phi \in \mathcal{F}_{d,K}(\mathcal{W},\mathcal{D},\calB)} \left \| \phi-\kappa \right \|_{L^2(\nu)}\\
    \lesssim& \frac{8\lambda \sqrt{K}\calB \sqrt{\calD+2+\log d}}{\sqrt{na_n}}+\frac{4 \lambda  \sqrt{K}n\beta_{a_n}}{a_n}+\lambda\sqrt{K} \calB^{-\tau/(d+1)},
\end{align*}
where the first inequality is from Lemma \ref{lem:loss},  the second inequality is from Theorems \ref{theo:statistical} and \ref{theo:approximation}.
\end{proof}

\section{The Curse of Dimensionality}\label{sec:curse}
In Subsection \ref{sub:approx}, the upper bound for the approximation error is given by $\sqrt{K} \calB^{-\tau/(d+1)}$. However, as $d$ approaches infinity, this bound exhibits explosive growth, resulting in the curse of dimensionality. 
In this section, we mitigate this issue based on the following assumption.
\begin{assumption}\label{assump:Low-dimensional-manifold-hypothesis}
    The feature space is supported on a compact $s$-dimensional Riemannian manifold $ \mathcal{A}$ embedded in $\R^d$, which can be decomposed into a union of $C$ open sets in $\R^d$  and $ C \ll d $. That is, $ \mathcal{A}=\bigcup^C_{j=1}U_j$. For $ j \in \{1, 2, \dots, C\} $, there exists a linear and invertible map $ \zeta_j,j=1,\cdots,C$ with $ s < d $ such that $ \zeta_j(x) \in  [0,1]^s,\forall x\in U_j $. 
\end{assumption}

Assumption \ref{assump:Low-dimensional-manifold-hypothesis} is well-founded. There is a vast amount of data generated and collected in our daily lives. Despite their diverse sources and natures, these seemingly unstructured data often possess inherent structures. One commonly used structure is that high-dimensional data may reside in a low-dimensional subspace rather than being arbitrarily distributed throughout the entire space. For instance, handwritings of the same digit with varying rotations, translations, or thickness are approximately within a low-dimensional subspace. Other examples include human face image data, where the face images of each individual under different illuminations form a low-dimensional subspace, and a video sequence with multiple moving objects, where each moving object across different frames belongs to a low-dimensional subspace. 

In addition, the assumption about the linear and invertible relationship is also reasonable. In the realm of compressive sensing, accurate recovery of the true sparse signal is possible when the sensing matrix fulfills the Restricted Isometry Property (RIP) \cite{candes2005decoding,candes2008restricted}, or Block Restricted Isometry Property (BRIP) \cite{kamali2013block,he2025group} condition. Furthermore, the theory of differential manifold \cite{brand2002charting,jiao2023} guarantees the existence of local linearity and reversibility. 

In the following, we will give a new estimation of the error bound of the approximation error based on Assumption \ref{assump:Low-dimensional-manifold-hypothesis}. 
\begin{theorem}\label{theo:curse}
    Assuming that Assumption \ref{assump:Low-dimensional-manifold-hypothesis} holds, for any map $\kappa\in\calC_{\tau,d,K,V}$ on a compact $s$-dimensional Riemannian manifold $ \mathcal{A}$ embedded in $\R^d$, there exists $\calB>0$ with $\calW'\gtrsim C\calB^{s/(s+1)} \log\calB$ and $\calD'\gtrsim \log\calB$, and $\phi \in \calF_{d,K} (\calW',\calD',{\calB}')$ with ${\calB}'=C\calB$,  we have 
\begin{equation}
    \left \| \phi-\kappa \right \|_{L^2{(\nu)}} 
    \lesssim \sqrt{K} \calB^{-\tau/(s+1)}.
\end{equation}
\end{theorem}
\begin{proof}
Let $\rho_j,j=1,\cdots,C$ be elements in $C^\infty$ partition of unity on $\calA$ supported on $U_j$ and $\sum^C_{j=1}\rho_j=1$ (Theorem 13.7 in \cite{tu2011manifolds}). Then $\kappa$ can be decomposed as $\kappa=\sum^C_{j=1}\kappa\rho_j=\sum^C_{j=1}\kappa_j$, where $\kappa_j\in\calC_{\tau,s,K,V}$ is supported on $U_j$. Set $\tilde{\kappa}_j=\kappa_j\circ\zeta^{-1}_j$, which is supported on $\zeta_j(U_j)\in[0,1]^s$. From Theorem \ref{theo:approximation}, there exists a positive constant $\calB>0$ and a map $\tilde{\phi}_j\in \calF_{s,K} (\calW,\calD,\calB)$ for $\tilde{\kappa}_j$,  with $\calW\gtrsim \calB^{s/(s+1)} \log\calB$ and $\calD\gtrsim \log\calB$ such that
\begin{equation*}
    \left \| \tilde{\phi_j}-\tilde{\kappa}_j \right \| _{L^2(\nu)}\lesssim \sqrt{K} \calB^{-\tau/(s+1)},j=1,\cdots,C.
 \end{equation*}
Next we define 
\begin{equation*}
    \phi(x)=\sum_{j=1}^{C} \tilde{\phi}_j(\zeta_j(x))=\sum_{j=1}^{C}\tilde{\phi}_j\circ \zeta_j(x),
\end{equation*}
then according to (3) in Lemma \ref{lem:relation} we know that $\phi$ is also a ReLU neural network with the same depth as $\tilde{\phi}_j$,  but its width is $C$ times that of $\tilde{\phi}_j$. That is $\phi \in \calF_{d,K} (\calW',\calD',{\calB}')$ with $\calW'\gtrsim C\calB^{s/(s+1)} \log\calB$, $\calD'\gtrsim \log\calB$ and $\calB'=C\calB$. Hence
\begin{align*}
    &\left \| \phi-\kappa \right \|_{L^2{(\nu)}} 
    \le\sum_{j=1}^{C}\left \| \tilde{\phi}_j(\zeta_j(x))-\tilde{\kappa}_j(\zeta_j(x))\right \|_{L^2{(\nu)}}
    \overset{\tilde{x}=\zeta_j(x)}{\le} \sum_{j=1}^{C}\sqrt{|Q_j|}\left \| \tilde{\phi}_j(\tilde{x})-\tilde{\kappa}_j(\tilde x)\right \|_{L^2{(\nu)}}\\
    \lesssim& \sqrt{K} \calB^{-\tau/(s+1)},
\end{align*}
where $Q_j>0$ are the Jacobi determinants of $\tilde{x}=\zeta_j(x),j=1,\cdots,C$. 
\end{proof}
To prove Theorem \ref{theo:curse}, we construct a specific mapping $\tilde{\kappa}_j=\kappa_j\circ\zeta^{-1}_j,j=1,\cdots,C$ from $\zeta_j(U_j)$ to $\R^K$. Our proof mainly includes the following steps: 

We first estimate the upper bound of the approximation error for each map $\tilde{\kappa}_j,j=1,\cdots,C$.  
According to Theorem \ref{theo:approximation}, for each function, we can derive an upper bound for its approximation error. This bound depends on $s$, the intrinsic dimensionality of the domain, rather than $d$.

Assumption \ref{assump:Low-dimensional-manifold-hypothesis} states that $\zeta_j$ for all $j$ are linear maps. These functions establish a one-to-one correspondence between the Riemannian manifold and a transformed space. This correspondence is crucial because it allows us to map the simpler, lower-dimensional space back to the Riemannian manifold.

By combining the results from the individual bounds on each function from $\zeta_j(U_j)$ to $\R^K$ and leveraging the invertible relationship, we demonstrate that the overall approximation error for $\kappa$ is controlled by a function of $s$. This is significant because it shows that, despite potentially high ambient dimensionality $d$, the complexity of approximation is determined by the lower intrinsic dimensionality $s$.
\section{Conclusion}\label{sec:conclusion}
Labeling large-scale datasets is a costly and error-prone process, and even high-quality datasets are inevitably subject to mislabeling. Training deep neural networks on cheaply obtained noisy labeled datasets is a common task. In this paper, we focus on the error bounds of excess risk for classification problems with noisy labels within deep learning frameworks.

We estimate the error bound of the excess risk, expressed as a sum of statistical error and approximation error. We assess the statistical error on a dependent (mixing) sequence, bounding it with the help of the corresponding independent block sequence. For the approximation error, we establish these theoretical results to the vector-valued setting. Finally, we focus on the curse of dimensionality based on the low-dimensional manifold assumption. This work does not provide numerical validation herein, as it has been extensively covered in some existing works \cite{ghazi2021,liu2026blockrr}.
\section*{Acknowledgment}
Part of work of H. Liu was done during a visit at The Hong Kong University of Science and Technology. The work of H. Liu was supported in part by Interdisciplinary Research Program of HUST 2024JCYJ005, National Key Research and Development Program of China 2023YFC3804500. The work of Dr. Can Yang was supported in part by Hong Kong Research Grants Council Grants 16308120, 16307221, 16307322, 16302823 and 16309424; The Hong Kong University of Science and Technology Startup Grants R9405 and Z0428 from the Big Data Institute. 
\bibliographystyle{elsarticle-num}
\bibliography{mybib} 
\appendix
\section{Useful Lemmas}
\begin{lemma}[Lemma from Subsection 4.1 in \cite{Yu1994}]\label{Yu}
    Let $x_{1,a_n}$ and $\Xi_{1,a_n}$ be the mixing and independent block sequences defined in Definition \ref{def:ib1}, whose distributions are  $\mathcal{T}$ and $\tilde{\mathcal{T}}$, respectively. For any measurable function $g$ on $\R^{\mu_n a_n}$ with bound $\tilde{M}$,
    \begin{equation*}
        \left | \mathcal{T}g(x_{1,a_n})- \tilde{\mathcal{T}}g(\Xi_{1,a_n})\right | \le \tilde{M}(\mu_n-1)\beta_{a_n},
    \end{equation*}
    where $(a_n,\mu_n)$ is an integer pair with $n=2a_n\mu_n$, and $\beta_{a_n}$ is defined in \eqref{eq:beta-mixing}.
\end{lemma}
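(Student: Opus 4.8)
The plan is to read this as the classical block-coupling inequality for $\beta$-mixing sequences (it is in fact quoted from \cite{Yu1994}) and to reconstruct it by a telescoping coupling over the blocks. Here $\calT g(x_{1,a_n})$ and $\tilde{\calT} g(\Xi_{1,a_n})$ are the expectations $\E_{\calT}[g]$ and $\E_{\tilde\calT}[g]$ taken, respectively, against the joint law $\calT$ of the $\mu_n$ dependent blocks $B_1,\dots,B_{\mu_n}$ (with $B_j=\{\hat x_i:i\in G_j\}$) and against the product law $\tilde\calT=\calT_1\otimes\cdots\otimes\calT_{\mu_n}$ under which the blocks are independent while each $B_j$ retains its true marginal $\calT_j$. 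Since $|g|\le\tilde M$, the first step is the routine reduction $|\E_{\calT}[g]-\E_{\tilde\calT}[g]|\le\tilde M\,\|\calT-\tilde\calT\|_{\mathrm{TV}}$ (with the normalization of total variation matching Definition \ref{def:ib}), so it suffices to establish the estimate $\|\calT-\tilde\calT\|_{\mathrm{TV}}\le(\mu_n-1)\beta_{a_n}$.

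First I would record why $\beta_{a_n}$ is the relevant mixing coefficient: by the construction in Definition \ref{def:ib1} the blocks $G_j$ are separated by the intervening blocks $H_j$ of length $a_n$, so the last index of $G_j$ and the first index of $G_{j+1}$ are $a_n$ apart. Consequently each block $B_{j+1}$ is measurable with respect to a future $\sigma$-algebra $\sigma(x^{t+a_n:\infty})$ relative to the past $\sigma(x^{1:t})$ carrying $B_1,\dots,B_j$, which is exactly the configuration governed by $\beta_{a_n}$ in \eqref{eq:beta-mixing}.

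Next I would carry out the decoupling one junction at a time. Introduce hybrid laws $\nu_m$ for $0\le m\le\mu_n$ under which the first $m$ blocks are declared independent of the rest while the remaining $\mu_n-m$ blocks keep their joint dependence, so that $\nu_0=\calT$ and $\nu_{\mu_n}=\tilde\calT$. The step $\nu_m\to\nu_{m+1}$ replaces the conditional law of $B_{m+1}$ given $B_1,\dots,B_m$ by its marginal; by the definition of $\beta_{a_n}$ as an expected total variation between the conditional and unconditional laws of the future (equivalently, by Berbee's coupling lemma) this costs at most $\beta_{a_n}$, i.e. $\|\nu_m-\nu_{m+1}\|_{\mathrm{TV}}\le\beta_{a_n}$. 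Summing over the $\mu_n-1$ genuine junctions by the triangle inequality gives $\|\calT-\tilde\calT\|_{\mathrm{TV}}\le(\mu_n-1)\beta_{a_n}$, and combining with the reduction above yields the claim.

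The step I expect to be the main obstacle is making a single decoupling rigorous. One must show that replacing the conditional law of the finite block $B_{m+1}$ by its marginal is dominated by $\beta_{a_n}$, even though \eqref{eq:beta-mixing} takes a supremum over the entire future tail $\sigma(x^{t+a_n:\infty})$. This is precisely what makes the bound work: the $\sigma$-algebra generated by the single block $B_{m+1}$ is contained in that future tail, so the supremum defining $\beta_{a_n}$ only over-counts and the total variation restricted to $B_{m+1}$ is no larger. A subsidiary point is to verify that tensoring both hybrids by the shared marginal of the already-decoupled coordinates preserves total-variation distance, which reduces each step to a clean past-versus-single-future-block comparison and keeps the final constant at exactly $(\mu_n-1)$ rather than picking up spurious factors.
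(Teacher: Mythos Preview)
The paper does not give its own proof of this lemma; it is stated in the appendix as a quotation from \cite{Yu1994} and used as a black box. Your reconstruction is the standard telescoping/hybrid-coupling argument from Yu's paper: reduce to a total-variation bound, decouple one block at a time, and use the $a_n$-gap between consecutive $G_j$'s to control each step by $\beta_{a_n}$. This is exactly the route Yu takes, so your proposal is correct and aligned with the original source the paper cites.
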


\begin{lemma}[Lemma 5 in \cite{Meir2003}]\label{Meir}
    Let $\left\{g_i(\theta)\right\}$ and $\left\{h_i(\theta)\right\}$ be sets of functions defined for all $\theta$ in some domain $\Theta $. If $\left | g_i(\theta)- g_i(\theta')\right | \le \left | h_i(\theta)- h_i(\theta')\right |$ holds for all $i,\theta,\theta'$, then for any function $c(x,\theta)$ and probability
    distribution of x is over $X$, we have
 \begin{equation*}
     \mathbb{E}_{\theta}\mathbb{E}_{X}\sup_{\theta \in \Theta}\left\{c(X,\theta)+\sum_{i=1}^{n}\sigma_{i}g_{i}(\theta)\right\} \le \mathbb{E}_{\theta}\mathbb{E}_{X}\sup_{\theta \in \Theta}\left\{c(X,\theta)+\sum_{i=1}^{n}\sigma_{i}h_{i}(\theta)\right\}.
 \end{equation*}
\end{lemma}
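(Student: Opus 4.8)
The plan is to prove the inequality by a \emph{hybrid} (telescoping) argument that replaces $g_i$ by $h_i$ one index at a time and shows that each replacement can only increase the expected supremum. Since the $\sigma_i$ are i.i.d.\ Rademacher signs, the expectation over them is a finite average over the $2^n$ sign patterns, so every manipulation reduces to conditioning and then averaging a two-point distribution; no delicate measure theory is needed beyond measurability of the suprema. Throughout I write $\mathbb{E}_\sigma$ for this average over the signs.

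The engine of the argument is a single-variable comparison. Fix an index $k$ and condition on $X$ together with every sign $\sigma_j$, $j\neq k$; let $B(\theta)$ denote the resulting $\sigma_k$-independent part of the objective. Averaging over $\sigma_k=\pm1$ gives the exact identity
\begin{equation*}
\mathbb{E}_{\sigma_k}\sup_\theta\{B(\theta)+\sigma_k g_k(\theta)\}
=\tfrac12\Big[\sup_\theta\{B+g_k\}+\sup_\theta\{B-g_k\}\Big],
\end{equation*}
and likewise with $h_k$. So it suffices to prove, pointwise in the conditioning variables,
\begin{equation*}
\sup_\theta\{B+g_k\}+\sup_\theta\{B-g_k\}\le\sup_\theta\{B+h_k\}+\sup_\theta\{B-h_k\}.
\end{equation*}
For this I pick, up to an arbitrary $\varepsilon>0$, near-maximizers $\theta_1$ of $B+g_k$ and $\theta_2$ of $B-g_k$, so the left side is within $2\varepsilon$ of $B(\theta_1)+B(\theta_2)+\big(g_k(\theta_1)-g_k(\theta_2)\big)$. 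The hypothesis $|g_k(\theta_1)-g_k(\theta_2)|\le|h_k(\theta_1)-h_k(\theta_2)|$ upgrades the $g_k$-increment to the $h_k$-increment, and a short case split on the sign of $h_k(\theta_1)-h_k(\theta_2)$ regroups $B(\theta_1)$ and $B(\theta_2)$ with the matching $\pm h_k$ terms, bounding the expression by $\sup_\theta\{B+h_k\}+\sup_\theta\{B-h_k\}$; letting $\varepsilon\to0$ closes the step.

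I then chain the replacements. Set
\begin{equation*}
H_k=\mathbb{E}_\sigma\mathbb{E}_X\sup_\theta\Big\{c(X,\theta)+\sum_{i\le k}\sigma_i h_i(\theta)+\sum_{i>k}\sigma_i g_i(\theta)\Big\},
\end{equation*}
so that $H_0$ is the left-hand side of the lemma and $H_n$ is the right-hand side. Applying the single-variable comparison at index $k$ inside the conditional average over $\sigma_k$ — with $B(\theta)$ collecting $c(X,\theta)$, the already-swapped terms $\sigma_i h_i$ for $i<k$, and the not-yet-swapped terms $\sigma_i g_i$ for $i>k$ — and then taking the tower expectation over $X$ and the remaining signs yields $H_{k-1}\le H_k$. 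The telescoped chain $H_0\le H_1\le\cdots\le H_n$ is precisely the claimed inequality. The step I expect to demand the most care is the single-variable comparison: the two-case regrouping must pair each of $B(\theta_1)$ and $B(\theta_2)$ with the correct sign of $h_k$, and this is exactly where the hypothesis is consumed, so an error in the case analysis would break the proof; the attendant $\varepsilon$-approximation of suprema that need not be attained must also be carried consistently through the conditioning. Everything else is routine tower-property bookkeeping of the hybrid argument.
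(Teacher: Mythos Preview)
Your proposal is correct; the single-variable comparison via $\varepsilon$-near-maximizers and the two-case regrouping on the sign of $h_k(\theta_1)-h_k(\theta_2)$ is exactly the standard Ledoux--Talagrand contraction trick, and the hybrid chaining $H_0\le H_1\le\cdots\le H_n$ is routine. Note, however, that the paper does not supply its own proof of this lemma: it is stated in the appendix as Lemma~5 of Meir~(2003) and simply invoked, so there is nothing in the paper to compare your argument against beyond the citation itself.
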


\begin{lemma}[Theorem 2 in  \cite{Golowich2018}]\label{Golowich}
    Let $\calF_{d,1}(\calW,\calD)$ represent a class of neural networks, characterized by an input dimension $d$, an output dimension of $1$, and a width and depth denoted by $\calW$ and $\calD$, respectively. Define $W_j$ as the weight matrix of the $j$-th layer, satisfying $\|W_j\|_{1,\infty}\le M( j)$ for all  $j=1,\cdots,\calD+1$, where $\|W_j\|_{1,\infty}$
  denotes the infinity norm of the 1-norms of the columns of $W_j$. Let $\hat S=\{x_i\}^n_{i=1}$ represent
$n$ input features such that $\|x_i\|_2\le B,\forall i$ and $\sigma_i,i=1,\cdots,n$ are i.i.d. Rademacher random variables that are independent of $\hat S$. 
    Then 
     \begin{equation*}
     \begin{split}
         \mathbb{E}_{\{x_i, \sigma_i\}_{i=1}^n}
    \left[ 
    \sup_{f \in \mathcal{F}_{d,1}(\calW,\calD,\calB)} 
    \left| \frac{1}{n} \sum_{i=1}^n \sigma_i f(x_i)
    \right| 
    \right]\leq&\dfrac{2}{ n}\prod\limits_{j=1}^{\calD+1}M(j)\cdot\sqrt{\calD+2+\log d}\cdot\sqrt{\max_{i\in\{1,\cdots,n\}}\|x_i\|^2_2}\\
\le& \frac{2 B\sqrt{\calD+2+\log d}\cdot\prod_{j=1}^{\calD+1}M(j)}{\sqrt{ n}}.
    \end{split}
    \end{equation*}
\end{lemma}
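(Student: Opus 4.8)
The plan is to prove this bound by the exponential-moment (log-sum-exp) peeling technique of Golowich--Rakhlin--Shamir, whose essential feature is that it converts the multiplicative factors accrued while stripping layers into an additive term inside a single square root, thereby producing the $\sqrt{\calD+2+\log d}$ dependence with no reference to the widths of the hidden layers. Write $R:=\E_{\hat S,\sigma}\sup_{f}\left|\frac1n\sum_{i=1}^n\sigma_i f(x_i)\right|$ for the Rademacher quantity in question. First I would fix $\lambda>0$ and apply Jensen's inequality to $t\mapsto e^{\lambda n t}$ to get $\exp(\lambda n R)\le \E_{\hat S,\sigma}\sup_f\exp\!\left(\lambda\left|\sum_i\sigma_i f(x_i)\right|\right)$, and then split the absolute value via $e^{|a|}\le e^{a}+e^{-a}$ so that only one-sided exponential moments must be controlled.

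Next I would peel the network one layer at a time, starting from the output. A linear layer $h\mapsto W h$ is homogeneous, so the supremum over the admissible weight matrix pulls its norm $M(j)$ out as a multiplicative factor acting on $\lambda$ while leaving a Rademacher sum over the previous layer's representation; the constraint $\|W_j\|_{1,\infty}\le M(j)$ is exactly what keeps this extraction clean, since the supremum of a linear functional over the norm ball is attained at an extreme point. Each ReLU, being $1$-Lipschitz with value $0$ at the origin, is removed by the contraction principle in its exponential-moment form, a step costing only a constant factor $2$. The crucial point --- and the step I expect to be the main obstacle --- is carrying out each ReLU removal through a \emph{vector} contraction so that the hidden widths never trigger a per-neuron union bound; a naive coordinatewise union bound would insert a $\log\calW$ per layer and destroy the width-independence, so the argument must keep the hidden representations bundled until the very bottom and only unbundle at the input.

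After all $\calD+1$ layers have been stripped, the product $\prod_{j=1}^{\calD+1}M(j)$ has been extracted and the remaining quantity is an exponential moment of a $\max$ over the $2d$ signed input coordinates of the sums $\sum_i\sigma_i x_{i,k}$. Here I would apply a union bound over these $2d$ extreme points together with Hoeffding's sub-Gaussian estimate $\E_\sigma\exp(t\sum_i\sigma_i x_{i,k})\le\exp(t^2\sum_i x_{i,k}^2/2)$ and the crude bound $\sum_i x_{i,k}^2\le n\max_i\|x_i\|_2^2$. Collecting constants, the $\calD+1$ factors of $2$ become a term $(\calD+1)\log 2$ and the $2d$-fold union becomes $\log(2d)$, both living \emph{inside} the exponent, giving $\exp(\lambda n R)\lesssim 2^{\calD}d\,\exp\!\big(\tfrac12\lambda^2\,(\prod_j M(j))^2\,n\max_i\|x_i\|_2^2\big)$.

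Finally I would take logarithms, divide by $\lambda n$, and optimize over $\lambda$. The balance point $\lambda\asymp \sqrt{\log(2^{\calD}d)}\big/\big(\prod_j M(j)\sqrt{n}\,\sqrt{\max_i\|x_i\|_2^2}\big)$ yields $R\lesssim \frac{1}{\sqrt n}\prod_{j=1}^{\calD+1}M(j)\sqrt{\max_i\|x_i\|_2^2}\,\sqrt{\calD+2+\log d}$, which is the first displayed bound once the numerical constant is matched; the second bound is then immediate from $\max_i\|x_i\|_2\le B$. The only genuinely delicate ingredient in the whole argument is the width-free treatment of the intermediate ReLU layers described above; the Jensen reduction, the Hoeffding base case, and the one-variable optimization are all routine.
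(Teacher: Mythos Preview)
The paper does not prove this lemma: it appears in the appendix of auxiliary results as a direct citation of Theorem~2 in \cite{Golowich2018} and is invoked as a black box in the proof of Theorem~\ref{theo:statistical}. Your proposal is a sketch of the original Golowich--Rakhlin--Shamir argument, and its overall architecture --- Jensen to pass to exponential moments, layer-by-layer peeling that extracts $\prod_j M(j)$, contraction for ReLU at a cost of a factor $2$ per layer, Hoeffding plus a union bound over the $2d$ signed input coordinates at the base, and final optimization over $\lambda$ --- matches that source.

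One clarification on the step you flag as the main obstacle: the width-independence does not hinge on a \emph{vector} contraction. Under the $\|\cdot\|_{1,\infty}$ constraint every row of a weight matrix lies in the same $\ell_1$-ball, so after taking the supremum over the outgoing weight vector you obtain $M(j)\,\bigl\|\sum_i\sigma_i h(x_i)\bigr\|_\infty$, and the $\max$ over hidden neurons immediately collapses to a \emph{scalar} supremum over a single $\ell_1$-constrained row --- no per-neuron union bound ever arises. The ordinary scalar Ledoux--Talagrand contraction then removes the ReLU. With this mechanism understood, the rest of your sketch goes through and yields the stated bound.
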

\begin{lemma}
[Lemma 25 in \cite{feng2023}]\label{lem:relation}
    
    Let $\mathcal{F}_{d,K}(\calW,\calD,\calB)$ denote the class of functions generated by a ReLU neural network with input dimension $d$, output dimension $K$, width 
 $\calW$ and depth $\calD$, with weights bounded by $\calB$. Suppose $g_1 \in \mathcal{F}_{d_1,K_1}(\calW_1,\calD_1,\calB_1)$, $g_2 \in \mathcal{F}_{d_2,K_2}(\calW_2,\calD_2,\calB_2)$, then the following properties hold:\\
    (1) If $d_1=d_2$, $K_1=K_2$, $\calW_1 \le \calW_2$, $\calD_1 \le \calD_2$, $\calB_1\le\calB_2$, then ${\calF}_{d_1,K_1}(\calW_1,\calD_1,\calB_1)\subseteq {\calF}_{d_2,K_2}(\calW_2,\calD_2,\calB_2)$.\\
    (2) If $K_1=d_2$, then $g_2 \circ g_1 \in \mathcal{F}_{d_1,K_2}(\max \left\{\calW_1,\calW_2\right\},\calD_1+\calD_2,\calB_2\max\{\calB_1,1\})$.
    \\
    (3) If $d_1=d_2$ and $K_1=K_2$, then 
    $c_1g_1 + c_2g_2 \in \mathcal{F}_{d_1,K_1}(\calW_1+\calW_2,\max \left\{\calD_1,\calD_2\right\},\left | c_1 \right | \calB_1+\left | c_2 \right | \calB_2),\forall c_1,c_2\in \mathbb{R}$.\\
    (4) If $d_1=d_2$, define $g(x)=(g_1(x),g_2(x))$,   then $$g\in \calF_{d_1+d_2,K_1+K_2}(\calW_1+\calW_2,\max\{\calD_1,\calD_2\},\max\{\calB_1,\calB_2\}).$$
\end{lemma}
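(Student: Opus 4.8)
The four assertions are purely structural facts about how the composition and parallelisation of ReLU networks act on the three architectural budgets (width, depth, weight bound), so the plan is to exhibit, for each claim, an explicit network realising the target function and then simply read off $\calW$, $\calD$ and the weight constraint from its weight matrices. Throughout I would use two elementary observations. First, every hidden layer is followed by $\sigma$, so its output $\phi_i$ is coordinate-wise nonnegative; consequently a layer with weight $I$ and bias $0$ acts as the identity on $\phi_i$, since $\sigma(v)=v$ for $v\ge 0$. Second, the relevant matrix norm is submultiplicative, and for block-diagonal matrices it equals the maximum of the norms of the diagonal blocks. These two facts drive all four constructions.

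For (1), width is defined as a maximum over hidden layers, so a network of width $\calW_1\le\calW_2$ already has width $\le\calW_2$ and nothing need change there. To raise the depth from $\calD_1$ to $\calD_2$ I would insert $\calD_2-\calD_1$ identity hidden layers immediately before the final linear map; by the nonnegativity observation each such layer $(I,0)$ reproduces the incoming activation exactly, contributes a factor $\max\{\|(I,0)\|,1\}=1$ to the weight budget, and hence leaves both the realised function and the constraint intact, after which monotonicity of the constraint in $\calM$ gives membership in $\calF_{d_2,K_2}(\calW_2,\calD_2,\calM_2)$. For (4) I would place $g_1$ and $g_2$ side by side on the concatenated input: stacking their hidden units produces block-diagonal weight matrices, so the widths add to $\calW_1+\calW_2$, the depth is the common value $\max\{\calD_1,\calD_2\}$ after padding the shallower network with identity layers as in (1), and the block-diagonal norm identity yields the per-layer bound $\max\{\calM_1,\calM_2\}$, giving the stated class $\calF_{d_1+d_2,K_1+K_2}(\cdots)$.

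For (3) I would first form the parallel network of (4) with a \emph{shared} input, then append the summation map $(u,v)\mapsto u+v$, i.e. the linear layer $[\,I\mid I\,]$, and merge it into the two output layers so that the final map becomes $[\,W_{\calD_1}^{(1)}\mid W_{\calD_2}^{(2)}\,]$ with bias $b_{\calD_1}^{(1)}+b_{\calD_2}^{(2)}$; this realises $g_1+g_2$ with width $\calW_1+\calW_2$, depth $\max\{\calD_1,\calD_2\}$, and the additivity of the bias together with the column structure of the merged output matrix gives the bound $\calM_1+\calM_2$. For (2) the key observation is that the final map of $g_1$ is affine (no $\sigma$) while the first hidden map of $g_2$ is $\sigma$ applied to an affine map; composing the two affine maps and keeping the single $\sigma$ merges them into one hidden layer, which is exactly why the depth is $\calD_1+\calD_2$ rather than $\calD_1+\calD_2+1$. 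The width is then $\max\{\calW_1,\calW_2\}$, and the norm of the merged layer $\bigl(W_0^{(2)}W_{\calD_1}^{(1)},\,W_0^{(2)}b_{\calD_1}^{(1)}+b_0^{(2)}\bigr)$ is controlled by submultiplicativity, producing the factor $\calM_2\max\{\calM_1,1\}$.

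The main obstacle is the weight-budget bookkeeping rather than the constructions, which are routine. The delicate point in (2) is bounding the merged affine map: one should pass to the homogeneous representation in which an affine map $v\mapsto Wv+b$ is encoded by a single matrix acting on $(v,1)$, so that composition of affine maps becomes an honest matrix product to which submultiplicativity applies, and then absorb the bias contribution into the $\max\{\calM_1,1\}$ factor; this is precisely where the $\max\{\cdot,1\}$ term in the statement originates, and care is needed so that no spurious multiplicative constants survive. A secondary point is that the identity-layer padding used in (1), (3) and (4) is legitimate only because it is inserted \emph{among} hidden layers, where the activations are genuine $\sigma$-outputs and hence nonnegative; inserting such a layer after the (linear, possibly negative) output would be invalid. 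Once these norm estimates are pinned down under the per-layer weight convention of \cite{feng2023} — the convention under which the clean factors $\calM_1+\calM_2$ and $\calM_2\max\{\calM_1,1\}$ arise — the four claims follow.
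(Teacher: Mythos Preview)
The paper does not supply its own proof of this lemma; it is listed in the appendix of auxiliary results as a citation of Lemma~25 in \cite{feng2023}, with no argument given. Your sketch reproduces the standard constructions that underlie such results and is correct in outline, including the two points you flag as delicate (the homogeneous-coordinate trick for the merged affine layer in (2), and the restriction that identity padding be inserted among genuine $\sigma$-layers).
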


\begin{lemma}[Lemma A.8 in \cite{PETERSEN2018}]\label{PetersenA.8}
    Let $\beta\in(0,\infty)$,  $\beta=r+\sigma$ with $r \in \mathbb{N}$ and $\sigma \in (0,1]$ , let $d\in \mathbb{N}^+$. 
    For each $f\in \calF_{\tau,d,1,V}$ and arbitrary $x_0\in(0,1)^d$, there is a polynomial $p(x)= {\textstyle \sum_{\left | \alpha  \right |\le r }c_{\alpha}( x-x_0)^\alpha }$ with $c_\alpha \in [-C \cdot V,C \cdot V]$ for all $\alpha \in \mathbb{N}^d$ with $\left | \alpha \right |\le r $ and such that
    \begin{equation*}
        \left | f(x)-p(x) \right | \le C \cdot V \cdot  \left | x-x_0 \right | ^\beta, \quad \text{for all $x\in [0,1]^d$},
    \end{equation*}
    where $C=d^r$.
\end{lemma}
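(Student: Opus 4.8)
The plan is to take $p$ to be the order-$n$ Taylor polynomial of $f$ about $x_0$,
\begin{equation*}
p(x)=\sum_{|\alpha|\le n}c_\alpha\,(x-x_0)^\alpha,\qquad c_\alpha:=\frac{\partial^\alpha f(x_0)}{\alpha!},
\end{equation*}
and to control $f-p$ through a one-dimensional reduction. The coefficient bound is immediate: membership $f\in\calF_{\beta,d,B}$ controls $\sup_x|\partial^\alpha f(x)|\le B$ for every $|\alpha|\le n$, and since $\alpha!\ge 1$ we get $|c_\alpha|\le|\partial^\alpha f(x_0)|\le B\le d^nB=C\cdot B$, so $c_\alpha\in[-C\cdot B,\,C\cdot B]$. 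When $n=0$ the polynomial is the constant $f(x_0)$ and the desired estimate $|f(x)-f(x_0)|\le B|x-x_0|^{\sigma}=C\cdot B|x-x_0|^{\beta}$ is exactly the H\"older (or, if $\sigma=1$, Lipschitz) bound built into $\calF_{\beta,d,B}$, so I may assume $n\ge 1$ below.

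For $n\ge1$, fix $x\in[0,1]^d$, write $h=x-x_0$, and introduce the scalar auxiliary function $g(t)=f(x_0+th)$ for $t\in[0,1]$; the segment $x_0+th$ stays in the convex set $[0,1]^d$, so $g$ inherits the $C^n$ regularity of $f$. The multivariate chain rule (equivalently, expanding $(h\cdot\nabla)^k$ by the multinomial theorem) gives
\begin{equation*}
g^{(k)}(t)=\sum_{|\alpha|=k}\frac{k!}{\alpha!}\,\partial^\alpha f(x_0+th)\,h^\alpha,\qquad\text{so}\qquad \frac{g^{(k)}(0)}{k!}=\sum_{|\alpha|=k}c_\alpha\,h^\alpha,
\end{equation*}
which reproduces exactly the degree-$k$ homogeneous part of $p$. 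Applying single-variable Taylor's theorem with Lagrange remainder to $g$ on $[0,1]$ produces a \emph{single} $\theta\in(0,1)$ with
\begin{equation*}
f(x)=g(1)=\sum_{k=0}^{n-1}\frac{g^{(k)}(0)}{k!}+\frac{g^{(n)}(\theta)}{n!}=\sum_{|\alpha|\le n-1}c_\alpha h^\alpha+\sum_{|\alpha|=n}\frac{\partial^\alpha f(x_0+\theta h)}{\alpha!}\,h^\alpha.
\end{equation*}
Subtracting $p(x)$ cancels all low-order terms and leaves only the top-order discrepancy
\begin{equation*}
f(x)-p(x)=\sum_{|\alpha|=n}\frac{h^\alpha}{\alpha!}\bigl[\partial^\alpha f(x_0+\theta h)-\partial^\alpha f(x_0)\bigr].
\end{equation*}

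To conclude I would invoke the H\"older seminorm bound on the top derivatives carried by $\calF_{\beta,d,B}$: for $|\alpha|=n$ and $\theta\le1$,
\begin{equation*}
\bigl|\partial^\alpha f(x_0+\theta h)-\partial^\alpha f(x_0)\bigr|\le B\,|\theta h|^{\sigma}\le B\,|h|^{\sigma}.
\end{equation*}
Combining with $|h^\alpha|\le|h|^{n}$ for $|\alpha|=n$ gives $|f(x)-p(x)|\le B\,|h|^{n+\sigma}\sum_{|\alpha|=n}\frac{1}{\alpha!}$, and the multinomial identity $\sum_{|\alpha|=n}\frac{n!}{\alpha!}=d^n$ (i.e.\ $(1+\cdots+1)^n$ with $d$ ones) yields $\sum_{|\alpha|=n}\frac{1}{\alpha!}=d^n/n!\le d^n$. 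Hence $|f(x)-p(x)|\le d^nB\,|x-x_0|^{\beta}=C\cdot B\,|x-x_0|^{\beta}$, as claimed.

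The proof is essentially careful bookkeeping, and the one genuinely delicate point is the remainder: one needs a single intermediate point $\theta$ valid simultaneously for all multi-indices of order $n$, which is precisely why I reduce to the scalar $g$ and use the one-dimensional Lagrange form rather than a direct multivariate remainder expansion. The remaining checks are routine: the convexity of $[0,1]^d$ guarantees every derivative invoked is defined along the segment, the boundary case $\sigma=1$ gives a Lipschitz bound on the order-$n$ derivatives and leaves the estimate unchanged, and the constant $C=d^n$ is produced by the combinatorial identity above (with the $1/n!$ factor only helping).
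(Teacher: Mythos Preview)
Your proof is correct and follows the standard route: take the order-$n$ Taylor polynomial, reduce to a one-dimensional function along the segment to secure a single Lagrange intermediate point, then invoke the H\"older bound on the top-order derivatives together with the multinomial identity $\sum_{|\alpha|=n}n!/\alpha!=d^n$. The paper does not supply its own proof of this lemma; it merely quotes it from \cite{PETERSEN2018} as an auxiliary result, so there is nothing further to compare.
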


\begin{lemma}[Lemma 27 in \cite{feng2023}]\label{lem:high dimension}
    For any $d \ge 2$ and $k\in \mathbb{N}$, there exists $m \in \calF_{d,1}((6K+3)d,(k+1)d,d^7(2d)^{k+1})$ such that:
    \begin{equation*}
        \left | x_1\cdots x_d-m(x) \right | \le 3d2^{-2k},\quad x=(x_1,\cdots,x_d)^\top \in[-1,1]^d.
    \end{equation*}
    Moreover, $m(x)=0$ if $x_1\cdots x_d=0$.
\end{lemma}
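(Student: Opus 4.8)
The plan is to reduce the $d$-fold product to a chain of binary products and to synthesize each binary product from one univariate square-approximation block via polarization, then to read off the architecture from Lemma \ref{lem:relation}.

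\textbf{Step 1 (square block).} First I would construct a ReLU network approximating $t\mapsto t^2$. Writing $g(t)=\sigma(2t)-\sigma(4t-2)$ for the tent map and $g_s=g\circ\cdots\circ g$ ($s$-fold), the Yarotsky identity $t^2=t-\sum_{s\ge 1}2^{-2s}g_s(t)$, truncated after $k$ terms, gives a network $\mathrm{sq}_k\in\calF_{1,1}$ of depth $O(k)$ and constant width with $\sup_{t\in[0,1]}|t^2-\mathrm{sq}_k(t)|\le 2^{-2k-2}$ and $\mathrm{sq}_k(0)=0$. Precomposing with $|t|=\sigma(t)+\sigma(-t)$ and rescaling the domain to $[-2,2]$ yields an \emph{even} block $\mathrm{Sq}_k$ that vanishes at $0$ and has sup-error $\le 2^{-2k}$ there.

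\textbf{Step 2 (binary product and chaining).} Using $ab=\tfrac14[(a+b)^2-(a-b)^2]$ I would set $P_k(a,b)=\tfrac14[\mathrm{Sq}_k(a+b)-\mathrm{Sq}_k(a-b)]$. Evenness of $\mathrm{Sq}_k$ forces $P_k(a,b)=0$ whenever $a=0$ or $b=0$, and $|P_k(a,b)-ab|\le\tfrac12 2^{-2k}$ for $a,b\in[-1,1]$. The approximant is then the left-to-right chain $m(x)=P_k(x_d,P_k(x_{d-1},\cdots P_k(x_2,x_1)\cdots))$; since $P_k(a,0)=0$ for every $a$, a produced zero propagates, so $m(x)=0$ as soon as some $x_i=0$, giving the last assertion. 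Every partial product lies in $[-1,1]$, so all intermediate arguments stay in range; writing $\pi_j,\hat\pi_j$ for the exact and computed partial products, the bound $|x_j|\le 1$ gives the one-step estimate $|\hat\pi_j-\pi_j|\le\tfrac12 2^{-2k}+|\hat\pi_{j-1}-\pi_{j-1}|$, and summing the $d-1$ stages yields $|m(x)-x_1\cdots x_d|\le(d-1)\tfrac12 2^{-2k}\le 3d\,2^{-2k}$.

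\textbf{Step 3 (size bookkeeping).} Finally I would extract the architecture with Lemma \ref{lem:relation}. Parts (2) and (3) let me chain the $d-1$ copies of $P_k$ while carrying the unused coordinates forward through identity channels $\sigma(\cdot)-\sigma(-\cdot)$; this produces depth of order $(k+1)d$ and width of order $d$ times the constant per-block width, matching the stated $(6k+3)d$, while part (2) makes the weight-magnitude constraint multiply across compositions, producing the geometric factor $(2d)^{k+1}$ together with the polynomial prefactor $d^7$ from the affine maps.

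\textbf{Main obstacle.} The difficulty is bookkeeping rather than conceptual: one must thread the weight-norm product of Lemma \ref{lem:relation} through the full composition so that it lands on exactly $d^7(2d)^{k+1}$, and simultaneously keep the error accumulation \emph{additive} across all $d-1$ multiplications. A careless layout (for instance computing the two squares inside $P_k$ by further composition rather than in parallel, or failing to exploit evenness) would both turn the additive error into a multiplicative one and inflate the weight bound beyond $(2d)^{k+1}$; matching the precise constants may require replacing naive sequential chaining by a parallel refinement arrangement, which is where the careful accounting concentrates.
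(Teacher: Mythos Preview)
The paper does not give its own proof of this lemma; it is quoted verbatim from \cite{feng2023} (as ``Lemma 27'' there) and simply listed in the appendix of useful lemmas. So there is nothing in the present paper to compare your argument against.

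That said, your sketch is the standard Yarotsky-type construction (sawtooth $\Rightarrow$ square $\Rightarrow$ polarization $\Rightarrow$ chained binary products), which is exactly the route taken in the literature for this kind of statement and almost certainly what \cite{feng2023} does. The evenness trick to force $P_k(a,0)=P_k(0,b)=0$ and hence $m(x)=0$ whenever a coordinate vanishes is the right way to get the last clause, and your additive error accumulation $|\hat\pi_j-\pi_j|\le \tfrac12 2^{-2k}+|\hat\pi_{j-1}-\pi_{j-1}|$ is correct once you have $|x_j|\le1$.

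One small point you glossed over: you write ``every partial product lies in $[-1,1]$, so all intermediate arguments stay in range,'' but what you actually feed into $P_k$ at stage $j$ is the \emph{computed} $\hat\pi_{j-1}$, not the exact $\pi_{j-1}$, and $\hat\pi_{j-1}$ can drift outside $[-1,1]$ by the accumulated error. Since you built $\mathrm{Sq}_k$ on $[-2,2]$ and the drift is at most $(d-1)2^{-2k-1}\ll 1$, this is harmless, but you should say so (or insert an explicit $\min(\max(\cdot,-1),1)$ clipping layer, which is a $2$-ReLU module and preserves the zero-propagation property). The slack between your $(d-1)\cdot\tfrac12\,2^{-2k}$ and the claimed $3d\,2^{-2k}$ is precisely there to absorb this.
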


\section{The Lipschitz-Type Property}\label{appendix:lipschitz}
We start from the Lipschitz-type property for the $\ell_p,p\ge1$, CE and reverse CE losses.
\begin{lemma}\label{lem:lp}
    For the $\ell_p,p\ge1$ loss function and $a,b\in\R^K$, let $\tilde{a}=\mathrm{softmax}(a)$ and $\tilde{b}=\mathrm{softmax}(b)$, we have
    \[\left | \ell(\tilde a,q)-\ell(\tilde b,q) \right | 
       \le \sqrt{2}K\|a-b\|_2,\ \forall q\in\calY.\]
\end{lemma}
\begin{proof}
    Let the $i$-th softmax function be $S(x)_i=\frac{e^{x_i}}{\sum_{j}e^{x_j}}$,  and $s=\sum_{l}e^{x_l}$, then we have
    \begin{align*}
    \|\nabla_x [S(x)_i]\|_2 =&\sqrt{\frac{(\sum_{l\neq i}e^{x_i}e^{x_l})^2+\sum_{l \neq i}e^{2x_i}e^{2x_l}}{s^4}}\le \sqrt{2 \frac{(\sum_{l\neq i}e^{x_i}e^{x_l})^2}{s^4}}\\
    \le& \sqrt{2 \frac{(\sum_{l\neq i}e^{x_i}e^{x_l})^2}{(\sum_{l\neq i}e^{x_i}e^{x_l})^2}}=\sqrt{2},
\end{align*}
 where the first equality used the fact that $\sum_{i}x_i^2\le (\sum_{i}x_i)^2$. Then we have
 \[\|\tilde{a}-\tilde{b}\|_1\le \sum_{i=1}^{K} \left |  \tilde{a}_i- \tilde{b}_i \right | 
       =\sum_{i=1}^K \|\nabla_x[ S(\xi)_i]\|_2 \left \| a-b \right \|_2
        \le \sqrt{2}K\|a-b\|_2.\]
Therefore, for any $p\ge1$, we have $| \ell(\tilde a,q)-\ell(\tilde b,q) |\le\ell(\tilde{a},\tilde{b})\le\|\tilde{a}-\tilde{b}\|_1\le\sqrt{2}K\|a-b\|_2.$
\end{proof}

\begin{lemma}\label{lem:lipschitz}
    For the CE loss function and any $a,b\in\R^K$, let $\tilde{a}=\mathrm{softmax}(a)$ and $\tilde{b}=\mathrm{softmax}(b)$, we have
    \[\left | \ell(\tilde a,q)-\ell(\tilde b,q) \right | 
       \le \sqrt{2}K\|a-b\|_2,\ \forall q\in\calY.\]
\end{lemma}
\begin{proof}
   Let the $i$-th softmax-CE loss function be $\mathrm{CE}(x)_i=-\log(\frac{e^{x_i}}{\sum_{j}e^{x_j}})$,  and $s=\sum_{l}e^{x_l}$, then we have
  

\begin{align*}
    \|\nabla_x [\mathrm{CE}(x)_i]\|_2 =\sqrt{\frac{(\sum_{l\neq i}e^{x_l})^2+\sum_{l \neq i}e^{2x_l}}{s^2}}\le \sqrt{2 \frac{({\sum_{l \neq i}e^{x_l})^2}}{s^2}}= \sqrt{2 \frac{(\sum_{l \neq i}e^{x_l})^2}{(\sum_{l \neq i}e^{x_l})^2}}=\sqrt{2},
\end{align*}
where the first equality used the fact that $\sum_{i}x_i^2\le (\sum_{i}x_i)^2$. For CE loss function $\ell(\tilde{a},q)=-\sum_{i=1}^{K} q_i\log \tilde{a}_i$, $\ell(\tilde{b},q)=-\sum_{i=1}^{K} q_i\log \tilde{b}_i$, there exists $\exists \xi=[\xi_1\cdots\xi_K]^\top$ with $\xi_i \in  [a_i,b_i],\forall 1\le i\le K$ such that
     \begin{align*}
       \left | \ell(\tilde{a},q)-\ell(\tilde{b},q) \right | 
       &=\left | \sum_{i=1}^{K} q_i\log \tilde{a}_i-\sum_{i=1}^{K} q_i\log \tilde{b}_i\right | =\left | \sum_{i=1}^K q_i[\log \tilde{a}_i-\log \tilde{b}_i]\right |\\
       & \le \sum_{i=1}^{K} \left | q_i \right |\left | \log \tilde{a}_i-\log \tilde{b}_i \right | 
       =\sum_{i=1}^K\left | q_i \right | \|\nabla [\mathrm{CE}(\xi)_i]\|_2 \| a-b\|_2\\
       & \le \|q\|_2 \sqrt{2}K\|a-b\|_2\le\sqrt{2}K\|a-b\|_2.
   \end{align*}
\end{proof}

\begin{definition}[The reverse CE \cite{wang2019symmetric}]\label{appendix:def_reversece}
    For all $\tilde{a},q\in\calY=\{y=[y_1\ \cdots\ y_K]^\top \in\R^K|y_i\in[0,1],\sum^K_{i=1}y_i=1\}$ and a negative constant $-\infty<A<0$, the reverse CE loss function is defined as 
    \[\ell(\tilde{a},q)=-\sum_{i=1}^{K} \tilde{a}_ih(q_i),\]
    where
    \[h(x)=\left\{
    \begin{array}{cl}
    \log x,     &  x>e^A,\\
    A,     & \mathrm{otherwise}.
    \end{array}
    \right.\]
\end{definition}
\begin{lemma}\label{lem:lipschitz_rce}
    For any $a,b\in\R^K$, let $\tilde{a}=\mathrm{softmax}(a)$ and $\tilde{b}=\mathrm{softmax}(b)$, we have for the reverse CE loss function
    \[\left | \ell(\tilde{a},q)-\ell(\tilde{b},q) \right | 
       \le-\sqrt{2}KA \|a-b\|_2,\ \forall q\in\calY.\]
\end{lemma}
\begin{proof}
    For the reverse CE loss function $\ell(\tilde{a},q)=-\sum_{i=1}^{K} \tilde{a}_ih(q_i)$, $\ell(\tilde{b},q)=-\sum_{i=1}^{K} \tilde{b}_ih(q_i)$, there exists $\xi \in  [\tilde{a},\tilde{b}]$, we directly calculate
\begin{align*}
      & \left | \ell(\tilde{a},q)-\ell(\tilde{b},q) \right | 
       =\left | \sum_{i=1}^K \tilde{a}_ih(q_i)-\sum_{i=1}^K \tilde{b}_ih(q_i)\right | =\left | \sum_{i=1}^K h(q_i)(\tilde{a}_i-\tilde{b}_i)\right| \\
         \le& \sum_{i=1}^K \left | h(q_i) \right |\left | \tilde{a}_i-\tilde{b}_i \right |   \le \max_i|h(q_i)|\|\tilde{a}-\tilde{b}\|_1\le-A\|\tilde{a}-\tilde{b}\|_1 \le -\sqrt{2}KA \|a-b\|_2,
   \end{align*}
   where the last inequality is from Lemma \ref{lem:lp}.
\end{proof}
\end{document}